\newtheorem{theorem*}{Theorem}
\newtheorem{lemma}{Lemma}
\newcommand{\tr}{^\top}
\def\top{\intercal}
\newcommand{\op}[1]{\operatorname{#1}}
\newcommand{\abs}[1]{\left|#1\right|}
\newcommand{\Prb}[1]{\mathbb{P}\left(#1\right)}
\newcommand{\prns}[1]{\left(#1\right)}
\newcommand{\E}{\mathbb{E}}
\newcommand{\Rl}{\mathbb{R}}
\newcommand{\R}[1]{\mathbb{R}^{#1}}
\newcommand\independent{\protect\mathpalette{\protect\independenT}{\perp}}
\def\independenT#1#2{\mathrel{\rlap{$#1#2$}\mkern2mu{#1#2}}}
\newcommand\indep\independent
\newcommand{\eg}{\emph{e.g.}}
\title{Control Variates for Slate Off-Policy Evaluation}
\author{%
  Nikos Vlassis \\
  Netflix \\
  \And
  Ashok Chandrashekar\thanks{The author was with Netflix when this work was concluded.} \\
  WarnerMedia \\
  \AND
  Fernando Amat Gil \\
  Netflix \\
  \And
  Nathan Kallus \\
  Cornell University and Netflix \\
}
\begin{document}

\maketitle

\begin{abstract}
We study the problem of off-policy evaluation from batched contextual bandit data with multidimensional actions, often termed slates. The problem is common to recommender systems and user-interface optimization, and it is particularly challenging because of the combinatorially-sized action space.
Swaminathan et al.~(2017) have proposed the pseudoinverse (PI) estimator under the assumption that the conditional mean rewards are additive in actions.
Using control variates, we consider a large class of unbiased estimators that includes as specific cases the PI estimator and (asymptotically) its self-normalized variant. By optimizing over this class, we obtain new estimators with risk improvement guarantees over both the PI and the self-normalized PI estimators. Experiments with real-world recommender data as well as synthetic data validate these improvements in practice.
\end{abstract}

\section{Introduction}
\label{sec:intro}

Online services (news, music and video streaming, social media, e-commerce, app stores, etc.) 
serve content that often takes the form of a combinatorial, high-dimensional \emph{slate}, where each slot on the slate can take multiple values. For example, a personalized news service can have separate slots for local news, sports, politics, etc., with multiple news items from which to select in each slot \citep{li2010contextual}; or a video streaming service may organize the contents of the landing homepage of a user as a multi-slot slate, where each slot can contain shows from a given category \citep{gomez2015netflix}. Typically, the offered content is personalized via machine learning and A/B testing. However, the number of eligible items per slot may be in the hundreds or thousands, which makes the testing and optimization of personalized {policies} a challenging task. 

An alternative to A/B testing and online learning is \emph{off-policy evaluation} (OPE). In OPE we use historical data collected by a past policy in order to evaluate a new candidate policy. This is primarily an estimation problem, often grounded in causal assumptions about the data-generating process and its connections to a counterfactual one \citep{horvitz1952generalization,robins1995semiparametric,dudik2011doubly,bottou13a,athey2017efficient,bibaut2019more,kallus2019intrinsically}.
However, even with plentiful off-policy data, handling a combinatorial action space can be challenging: Unbiased importance sampling (IS) \citep{horvitz1952generalization} suffers variance on the scale of the cardinality of the action space under a uniform logging policy and deterministic target policy.
To address this, \citet{swaminathan2017off} have proposed the pseudoinverse (PI) estimator, which significantly attenuates the variance and remains unbiased for decomposable reward structures. In this paper, we discuss how to further reduce this variance using a \emph{control variates} approach \citep{glynn2002some}. We demonstrate that the self-normalized PI (wPI) estimator, which \citet{swaminathan2017off} found to be superior to PI, is asymptotically equivalent to adding a control variate. But, the choice of control variate is not optimal. We instead show how to choose optimal control variates, and even how to do so without incurring any bias. We provide strong empirical evidence based on the MSLR-WEB30K data from the Microsoft Learning to Rank Challenge \citep{qin2013introducing} that confirms the improvement of our approach over the PI and wPI estimators.

\subsection{Related work}

Much of the existing OPE work involves the use of \emph{doubly-robust} estimators and variants, which add control variates to the standard IS estimator \citep{robins1995semiparametric,dudik2011doubly,thomas2016data,farajtabar2018more,bibaut2019more,kallus2019intrinsically,su2020doubly}.
Directly optimizing control variates for OPE has been suggested earlier  \citep{farajtabar2018more,vlassis2019design} but not for problems involving high-dimensional slates.

Combinatorial bandits and variants such as semi-bandits have been studied by \citet{cesa2012combinatorial} and \citet{kveton2015tight} and are also covered in the book of \citet{lattimore2020bandit}. 
Earlier approaches to OPE for contextual combinatorial bandits have been restricted to small slate problems \citep{li2011unbiased} or relied on accurate
parametric models of slate rewards \citep{chapelle2009dynamic,guo2009click,filippi2010parametric}.
\citet{li2015toward} and \citet{wang2016learning} have suggested the use of partial matches between the logging and target actions in order to mitigate the variance explosion.
\citet{swaminathan2017off} have proposed the PI estimator, a version of which we study in this paper. Under factored policies, the PI estimator is similar to an estimator proposed by \citet{li2018offline} for OPE of ranking policies with user click models.
\citet{swaminathan2015counterfactual} and \citet{joachims2018deep} have developed estimators for OPE under the framework of counterfactual risk minimization and discuss approaches for offline policy optimization.
\citet{agarwal2018offline} have developed an OPE method for rankers, when only the relative value of two given rankers is of interest.
\citet{chen2019top} have developed an approach for offline policy gradients, when slates are unordered lists of items (of random size).
\citet{mcinerney2020counterfactual} and \citet{lopez2021learning} have studied the problem of slate OPE with semi-bandit feedback (one observed reward per slot), under different assumptions on the structure of the rewards and the contextual policies. 
In constrast to the latter two works, and similar to \citet{swaminathan2017off} and \citet{su2020doubly}, we study the slate OPE problem when only a single, slate-level reward is observed. \citet{su2020doubly} achieve variance reduction by clipping propensities, whereas we achieve variance reduction by optimizing control variates.

Applications of slate OPE in real-world problems include
\citet{sar2016beyond} who apply OPE to a large scale real world recommender system that handles purchases from tens of millions of users, 
\citet{hill2017efficient} who present a method to optimize a message that is composed of separated widgets (slots) such as title text, offer details, image, etc, and
\citet{mcinerney2020counterfactual} who apply OPE to a slate problem involving sequences of items such as music playlists.

\section{Setup and Notation}
\label{notation}

We consider contextual slate bandits where each slate has $K$ slots. We will use $[K]$ to denote the set $\{1,2,\ldots,K\}$.
The available actions in slot $k$ are $[d_k]=\{1,\dots,d_k\}$.
Thus, the set of available slates has cardinality $\prod_{k=1}^Kd_k$. The data consists of $n$ independent and identically distributed (iid) triplets $(X_i,A_i,R_i)$, $i=1,\dots,n$, representing the observed context, slate taken, and resulting reward, respectively. 
Here, $A_i=(A_{i1},\dots,A_{iK})$ where $A_{ik}\in[d_k]$.
We use $(X,A,R)$ to refer to a generic draw of one of the above triplets,
where $X\sim \Prb{X},A\sim\mu(\cdot\mid X),R\sim\Prb{\cdot\mid A,X}$.
The distribution $\mu(a\mid x)=\Prb{A=a\mid X=x}$ is called the \emph{logging policy} as the data can be seen as the logs generated by executing this policy. Probabilities $\mathbb P$, expectations $\E$, and (co)variances without subscripts indicating otherwise are understood to be with respect to this distribution. For any function $f(x,a,r)$ we will write
$
\E_n[f(X,A,R)]=\frac1n\sum_{i=1}^nf(X_i,A_i,R_i).
$
We will also follow the convention that \emph{random, data-driven} objects have the form $\hat \cdot_n$ (such as $\hat w_n$ or $\hat \theta_n$).

In OPE, we are given a fixed policy $\pi(a\mid x)$ and are interested in estimating its average reward $\E_{\pi}[R]$ using data collected under a logging policy $\mu(a\mid x)$. Here $\E_\pi$ refers to expectation under the distribution $X\sim \Prb{X},A\sim\pi(\cdot\mid X),R\sim\Prb{\cdot\mid A,X}$.
The quantity $\E_{\pi}[R]$ can be interpreted as a counterfactual given that $\Prb{\cdot\mid A=a,X}$ is equal to the distribution of the \emph{potential} reward of slate~$a$ given context $X$, for every $a$; or, in other words, that $\Prb{\cdot\mid A,X}$ represents a structural model. This assumption, known as unconfoundedness or ignorability, is only needed for giving a causal interpretation to $\E_{\pi}[R]$ but not for estimation.

The classical IS estimator for the problem of estimating $\E_{\pi}[R]$ is $\E_n\big[\frac{\pi(A\mid X)}{\mu(A\mid X)}R\big]$, which is unbiased but can suffer unacceptably large variance when $A$ is high-dimensional and $\mu$ provides coverage of all slates, as is necessary for the identification of arbitrary policies. Generally, we cannot do much better than IS in the worst case \citep{wang2017optimal}, unless we impose additional assumptions about the structure of actions or rewards.

In this paper, we will focus on the case where the logging policy is  \emph{factored}, that is, $\mu$ satisfies
$
\mu(a\mid x)=\prod_{k=1}^K\mu_k(a_k\mid x).
$
This generally holds if $\mu$ is given by a per-slot $\epsilon$-greedy policy or is the uniform distribution, both are common cases in practice.

We define the following slot-level density ratios and their sum
\begin{align} 
\label{eq:G}
Y_k=\frac{\pi(A_k\mid X)}{\mu_k(A_k\mid X)},\qquad G=1+\sum_{k=1}^K (Y_k-1) \, .
\end{align}
Then, we have the following reformulation result for $\E_{\pi}[R]$.
(A succinct proof of this result, as well as all our novel results, can be found in Section \ref{sec:proofs} of this paper and in the Appendix.)
\begin{lemma}[Reformulation under additive rewards \citep{swaminathan2017off}]\label{lemma:additive}
Under a factored $\mu$,
 $E_\pi[R]=\E[GR]$
whenever $\E[R\mid A,X]=\sum_{k=1}^K\phi_k(A_k,X)$ for some (latent) functions $\phi_k(a_k,x)$. 
\end{lemma}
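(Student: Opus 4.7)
The plan is to condition on $(A,X)$, use the additive structure of the reward to peel apart the sum, and then show that under the factored logging policy each cross-term in the expansion of $G$ collapses nicely. Concretely, by the tower property and the assumed decomposition,
\[
\E[GR] \;=\; \E\!\big[G\,\E[R\mid A,X]\big] \;=\; \sum_{k=1}^K \E\!\big[G\,\phi_k(A_k,X)\big],
\]
so it suffices to show that $\E[G\,\phi_k(A_k,X)] = \E_\pi[\phi_k(A_k,X)]$ for each $k$, since summing these over $k$ gives $\E_\pi[R]$ by the tower property in reverse.

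Expanding $G = 1 + \sum_{j=1}^K (Y_j - 1)$, the task reduces to analyzing $\E[(Y_j-1)\phi_k(A_k,X)]$ for each pair $(j,k)$. The first key observation is that $\pi(A_k\mid X)$ denotes the marginal of $\pi$ at slot $k$, so
\[
\E[Y_j \mid X] \;=\; \sum_{a_j} \mu_j(a_j\mid X)\,\frac{\pi(a_j\mid X)}{\mu_j(a_j\mid X)} \;=\; 1,
\]
regardless of whether $\pi$ itself factors. The second key observation is that under a factored $\mu$, the slot actions $A_1,\dots,A_K$ are conditionally independent given $X$. Hence for $j\neq k$, conditioning on $X$ and using independence gives $\E[(Y_j-1)\phi_k(A_k,X)\mid X] = \E[Y_j-1\mid X]\cdot\E[\phi_k(A_k,X)\mid X] = 0$, so all off-diagonal cross-terms vanish.

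For the diagonal $j=k$ term, a direct computation yields the desired change of measure on slot $k$:
\[
\E\!\big[(Y_k-1)\phi_k(A_k,X)\mid X\big] \;=\; \sum_{a_k}\!\big(\pi(a_k\mid X)-\mu_k(a_k\mid X)\big)\phi_k(a_k,X),
\]
which upon adding back the constant-$1$ piece of $G$ gives $\E[G\,\phi_k(A_k,X)\mid X] = \sum_{a_k}\pi(a_k\mid X)\phi_k(a_k,X) = \E_\pi[\phi_k(A_k,X)\mid X]$. Taking the outer expectation over $X$ and summing over $k$ recovers $\E_\pi[\sum_k \phi_k(A_k,X)] = \E_\pi[R]$, completing the proof.

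I do not expect any real obstacle here; the argument is essentially bookkeeping once one notices the two structural facts above. The only subtlety worth flagging is that the identity $\E[Y_j\mid X] = 1$ uses that $\pi(a_k\mid X)$ is understood as the slot-$k$ \emph{marginal} of $\pi$, not a factor of a factored policy — this is what allows the result to hold for arbitrary target policies $\pi$ while only requiring $\mu$ to be factored.
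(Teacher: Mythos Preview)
Your proof is correct and follows essentially the same approach as the paper's: both condition on $(A,X)$, expand $G$, use $\E[Y_j\mid X]=1$, and exploit the conditional independence of the slot actions under factored $\mu$ to kill the cross terms. The paper's grouping is slightly different---it isolates the diagonal piece $Y_k\phi_k(A_k,X)$ and shows the remainder $(1-K+\sum_{j\neq k}Y_j)\phi_k(A_k,X)$ has zero mean over $A$ given $X$---but this is just a reindexing of your computation, and your remark about $\pi(a_k\mid X)$ being the slot-$k$ marginal is the same observation the paper uses to conclude $\sum_{a_k}\pi(a_k\mid X)=1$.
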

Motivated by this result, in this paper we focus on estimating the target parameter
$\theta=\E[GR]$. 
Crucially, \cref{lemma:additive} implies that under additive rewards, the variance of an estimator of $\theta$ need only suffer the size of the slot-level density ratios (namely, $\sum_{k=1}^K d^2_k$ for uniform exploration and deterministic evaluation) rather than the possibly {astronomical} size of the slate density ratio appearing in the IS estimator (namely, $\prod_{k=1}^K d^2_k$ under the same settings). Moreover, \cref{lemma:additive} shows that $\mu$ need not even cover every slate combination -- only marginally every action per slot.
In the rest of the paper we will not assume any special structure for $\E[R\mid A,X]$; our estimation results on $\theta$ will hold irrespective of whether $\theta$ coincides with $E_\pi[R]$. In the experiments we will show results for both additive and non-additive rewards.

In the factored policy setting, the PI and wPI estimators of \citet{swaminathan2017off} are, respectively,
\begin{align}
\hat\theta_n^\text{PI}=\E_n[GR],\qquad
\hat\theta_n^\text{wPI}=\frac{\E_n[GR]}{\E_n[G]} \, .
\end{align}
The PI estimator is unbiased for estimating $\theta$ by construction. While wPI may be biased, it can further reduce PI's variance.

\section{A Class of Estimators}

We now construct a class of unbiased estimators that includes PI and (almost) wPI. We will then propose to choose \emph{optimal} estimators in this class and show how they can be approximated.

For any fixed tuple of weights $w=(w_1,\dots,w_K)\in\R K$, we define the estimator
\begin{align}
\hat\theta_n^{(w)}=\E_n [\Gamma_w], \qquad \Gamma_w=GR-\sum_{k=1}^K w_k(Y_k-1)  .
\end{align}
Here, each $Y_k-1$ acts as a \emph{control variate} \citep{glynn2002some}.
We slightly overload notation and for the case $w_1=\cdots=w_K=\beta\in\Rl$
we will write 
\begin{align}
\hat\theta_n^{(\beta)}=\E_n [\Gamma_\beta], \qquad \Gamma_{\beta}=GR-\beta(G-1),
\end{align}
which corresponds to using a single control variate $G-1$. 
Note that $\hat\theta_n^\text{PI}$ is included in the estimator class: When $\beta=0$, we have
$\Gamma_0=GR$ and $\hat\theta_n^\text{PI}=\hat\theta_n^{(0)}$. 
We further define
$
V_w=\op{Var}(\Gamma_w).
$
It is straightforward to show that, for any \emph{fixed} $w$, the estimator $\hat\theta_n^{(w)}$ is unbiased (for any $n$) and asymptotically normal. 
\begin{lemma}[Unbiasedness and asymptotic normality of $\hat\theta_n^{(w)}$]\label{lemma:unbiasedness}
For any {fixed} $w$, we have
\begin{align}
\E [\hat\theta_n^{(w)}] &= \E [\Gamma_w] = \theta  , \\
\sqrt n(\hat\theta_n^{(w)}-\theta)&\to_d\mathcal N(0,V_w) .
\end{align}
\end{lemma}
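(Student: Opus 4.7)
The proof reduces to two elementary observations: each control variate $Y_k - 1$ has mean zero under the logging distribution, and $\hat\theta_n^{(w)}$ is an empirical mean of i.i.d. quantities. My plan is to handle them in that order.

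For unbiasedness, linearity of expectation together with the i.i.d. assumption on the triples immediately gives
\[
\E[\hat\theta_n^{(w)}] = \E[\Gamma_w] = \E[GR] - \sum_{k=1}^K w_k \, \E[Y_k - 1].
\]
The first term is $\theta$ by definition of the target parameter. For the second, I would condition on $X$ and use the factored form of $\mu$ so that only the $k$th factor depends on $A_k$:
\[
\E[Y_k \mid X] = \sum_{a_k} \mu_k(a_k \mid X) \cdot \frac{\pi(a_k \mid X)}{\mu_k(a_k \mid X)} = \sum_{a_k} \pi(a_k \mid X) = 1,
\]
where $\pi(a_k \mid X) = \sum_{a_{-k}} \pi((a_k, a_{-k}) \mid X)$ is the slot-$k$ marginal of the target policy. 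Hence $\E[Y_k - 1] = 0$ and $\E[\Gamma_w] = \theta$, which is the unbiasedness statement.

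For asymptotic normality, $\hat\theta_n^{(w)} - \theta$ is simply the centered sample mean of the i.i.d.\ scalar summands $\Gamma_{w,i} = G_i R_i - \sum_k w_k (Y_{k,i} - 1)$, each with mean zero and common variance $V_w = \op{Var}(\Gamma_w)$. Provided $V_w < \infty$ (an implicit regularity assumption, satisfied as soon as $R$ and the slot-level density ratios $Y_k$ have finite second moments), the Lindeberg--L\'evy CLT gives
\[
\sqrt{n}\,(\hat\theta_n^{(w)} - \theta) \to_d \mathcal{N}(0, V_w).
\]

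There is no genuine obstacle here; the only substantive step is the conditional computation showing $\E[Y_k] = 1$, which is exactly the point where the factored-logging assumption is used. Note that no structural assumption on the rewards is needed (in contrast to \cref{lemma:additive}), so unbiasedness for $\theta = \E[GR]$ holds regardless of whether $\theta$ equals $\E_\pi[R]$.
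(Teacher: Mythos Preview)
Your proof is correct and follows essentially the same approach as the paper's: both compute $\E[Y_k\mid X]=1$ (you spell out the sum over $a_k$ explicitly, the paper just asserts it), conclude $\E[\Gamma_w]=\theta$ by iterated expectations/linearity, and then invoke the i.i.d.\ CLT for the asymptotic normality. Your added remarks on the finite-variance regularity condition and on the role of the factored-$\mu$ assumption are accurate and do not deviate from the paper's argument.
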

%
While PI is in this class and has variance $V_0$, wPI is not actually a member of this class, but it is asymptotically equivalent to a member. To see this, we first need to understand the asymptotics of $\hat\theta_n^{(w)}$ when we plug in a \emph{random, data-driven} $w$ denoted $\hat w_n$. 
It turns out (by Slutsky's theorem) that, if $\hat w_n$ converges to some \emph{fixed} vector $w$, the asymptotic behavior of $\hat \theta^{(\hat w_n)}_n$ is the same as plugging in the {fixed} $w$.
\begin{lemma}[Asymptotics of plug-in $w$]\label{lemma:convergence}
Suppose $\hat w_n\to_p w$, for some fixed $w$.
Then
\begin{align}
\sqrt n(\hat \theta^{(\hat w_n)}_n-\theta)\to_d\mathcal N(0,V_w) .
\end{align}
\end{lemma}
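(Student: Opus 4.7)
The plan is to reduce this to the fixed-$w$ case in \cref{lemma:unbiasedness} via a Slutsky argument. Start from the additive decomposition
\begin{align*}
\sqrt n\bigl(\hat\theta_n^{(\hat w_n)}-\theta\bigr)
= \sqrt n\bigl(\hat\theta_n^{(w)}-\theta\bigr)
+ \sqrt n\bigl(\hat\theta_n^{(\hat w_n)}-\hat\theta_n^{(w)}\bigr),
\end{align*}
and expand the second term by linearity in the weights:
\begin{align*}
\sqrt n\bigl(\hat\theta_n^{(\hat w_n)}-\hat\theta_n^{(w)}\bigr)
= -\sum_{k=1}^K (\hat w_{n,k}-w_k)\cdot\sqrt n\,\E_n[Y_k-1].
\end{align*}

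The first term on the right already converges to $\mathcal N(0,V_w)$ by \cref{lemma:unbiasedness}, so the work is to show the remainder is $o_p(1)$. For each $k$, a short calculation using the factorization of $\mu$ and the fact that $\pi(a_k\mid x)$ is a marginal probability gives $\E[Y_k]=\E\bigl[\sum_{a_k}\mu_k(a_k\mid X)\tfrac{\pi(a_k\mid X)}{\mu_k(a_k\mid X)}\bigr]=\E[\sum_{a_k}\pi(a_k\mid X)]=1$, hence $\E[Y_k-1]=0$. The ordinary CLT then implies $\sqrt n\,\E_n[Y_k-1]\to_d\mathcal N(0,\op{Var}(Y_k))$, so in particular this quantity is $O_p(1)$. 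Combined with the hypothesis $\hat w_{n,k}-w_k\to_p 0$, each summand is $o_p(1)\cdot O_p(1)=o_p(1)$, and summing finitely many such terms preserves this.

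Putting it together, Slutsky's theorem yields $\sqrt n(\hat\theta_n^{(\hat w_n)}-\theta)\to_d\mathcal N(0,V_w)$, as claimed. I do not anticipate a substantive obstacle: the only step that is not purely mechanical is verifying $\E[Y_k-1]=0$, which is where the factored-$\mu$ assumption enters; finite second moment of $Y_k$ is implicit since $V_w$ is assumed finite in \cref{lemma:unbiasedness}.
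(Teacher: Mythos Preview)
Your proposal is correct and follows essentially the same approach as the paper: decompose into the fixed-$w$ CLT term plus a remainder that is a finite sum of $(\hat w_{n,k}-w_k)\cdot\sqrt n\,\E_n[Y_k-1]$ factors, argue each factor is $o_p(1)\cdot O_p(1)$, and conclude via Slutsky. The only cosmetic difference is that the paper phrases the remainder step as two applications of Slutsky's theorem rather than in $o_p/O_p$ notation, and does not re-derive $\E[Y_k]=1$ inline.
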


We can now establish the following corollary showing that wPI is asymptotically equivalent to $\hat \theta^{(\theta)}_n$, that is, a member of our class of estimators using $\beta = \theta$ (the unknown target estimand).
\begin{lemma}[Asymptotics of wPI]\label{lemma:convergence2}
We have
\begin{align}
\sqrt n(\hat\theta_n^\text{wPI}-\theta)\to_d\mathcal N(0,V_{\theta}).
\end{align}
\end{lemma}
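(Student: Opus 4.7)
The plan is to express the wPI residual exactly in terms of $\hat\theta_n^{(\theta)}$ (the member of our class that uses the fixed scalar weight $\beta=\theta$) and then invoke Slutsky's theorem using Lemma \ref{lemma:unbiasedness} plus a law-of-large-numbers statement for the denominator.

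First I would record the identity $\E[G]=1$. Indeed, for each $k$, $\E[Y_k\mid X]=\sum_{a_k}\mu_k(a_k\mid X)\pi(a_k\mid X)/\mu_k(a_k\mid X)=1$, so $\E[G]=1+\sum_{k=1}^K\E[Y_k-1]=1$. By the standard law of large numbers, $\E_n[G]\to_p 1$.

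Next, I would carry out the algebraic manipulation that produces $\hat\theta_n^{(\theta)}$ inside the numerator of wPI. Observe that
\begin{align}
\hat\theta_n^{\text{wPI}}-\theta
=\frac{\E_n[GR]-\theta\,\E_n[G]}{\E_n[G]}
=\frac{\E_n[\,GR-\theta(G-1)\,]-\theta}{\E_n[G]}
=\frac{\hat\theta_n^{(\theta)}-\theta}{\E_n[G]},
\end{align}
where the middle equality uses $\E_n[\theta(G-1)]=\theta\,\E_n[G]-\theta$. So
\begin{align}
\sqrt n\,(\hat\theta_n^{\text{wPI}}-\theta)=\frac{\sqrt n\,(\hat\theta_n^{(\theta)}-\theta)}{\E_n[G]}.
\end{align}

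Finally I would apply Lemma \ref{lemma:unbiasedness} with the fixed weight $\beta=\theta$ to conclude that $\sqrt n\,(\hat\theta_n^{(\theta)}-\theta)\to_d\mathcal N(0,V_\theta)$, and combine this with $\E_n[G]\to_p 1$ via Slutsky's theorem to obtain the stated conclusion. There is essentially no hard step: the only thing to notice is the identity that turns the ratio estimator into a fixed-$\beta$ control-variate estimator divided by $\E_n[G]$; once this is spotted, the remainder is a routine Slutsky argument. (Equivalently, one could write $\hat\theta_n^{\text{wPI}}=\hat\theta_n^{(\hat\beta_n)}$ with $\hat\beta_n=\hat\theta_n^{\text{wPI}}$, verify $\hat\beta_n\to_p\theta$ from the LLN plus the continuous mapping theorem, and invoke Lemma \ref{lemma:convergence} directly; the direct algebraic route above avoids even this detour.)
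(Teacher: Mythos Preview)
Your proof is correct. The paper takes the alternative route you sketch in your closing parenthetical: it first shows $\hat\theta_n^{\text{wPI}}\to_p\theta$ (via $\E_n[GR]\to_p\theta$, $\E_n[G]\to_p1$, and the continuous mapping theorem), then rewrites
\[
\hat\theta_n^{\text{wPI}}=\E_n[GR]-\hat\theta_n^{\text{wPI}}\,\E_n[G-1]=\hat\theta_n^{(\hat\beta_n)}\quad\text{with}\quad\hat\beta_n=\hat\theta_n^{\text{wPI}},
\]
and invokes \cref{lemma:convergence}. Your route factors the \emph{fixed} weight $\theta$ into the numerator and appeals to \cref{lemma:unbiasedness} plus one Slutsky step; this is marginally more elementary, since it bypasses \cref{lemma:convergence} (whose proof is itself a Slutsky argument). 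The underlying identity is the same in both cases; the paper's version has the small expository benefit of exhibiting wPI explicitly as a plug-in member of the class $\hat\theta_n^{(\hat w_n)}$, which sets up the subsequent optimization over $w$.
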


\section{Optimal Control Variates}

In the previous section we showed that both PI and wPI are, or are asymptotically equivalent to, members of the class of estimators $\hat\theta_n^{(w)}$. But the class is more general than these two, so it behooves us to try to find an optimal member.

\subsection{Optimal Single Control Variate}

We first focus on the case $w_1=\cdots=w_K=\beta$, that is, using a single control variate $G-1$. Both PI and wPI fall in this category with $\beta=0$ and $\beta=\theta$, respectively, as we showed above.
Let 
\begin{equation}\label{eq:voptsignle}
V^\dagger=\inf_{\beta\in\Rl}V_{\beta}
\end{equation}
represent the minimal variance in the class of single-control-variate estimators.
By construction, $V^\dagger\leq V_0=\op{Var}(GR)$ and $V^\dagger\leq V_{\theta}$, the right-hand sides being the (asymptotic) variance of PI and wPI, respectively.
Our next result derives this optimum.
\begin{lemma}[Optimal single control variate]\label{lemma:optimalsingle}
We have
\begin{align}
V^\dagger&=V_{\beta^*}=V_0-\frac{(\E[G^2R]-\theta)^2}{\sum_k\op{Var}(Y_k)},
\end{align}
where
\begin{align}
\beta^*&=\frac{\E[G^2R]-\theta}{\sum_k\op{Var}(Y_k)} .
\end{align}
\end{lemma}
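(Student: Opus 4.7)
The plan is to recognize that $V_\beta$ is a scalar quadratic in $\beta$, minimize it by elementary calculus, and then simplify the resulting expressions using the factored structure of $\mu$. Expanding the variance (and using that $G-1$ and $G$ differ by a constant),
\begin{equation*}
V_\beta = V_0 - 2\beta\,\op{Cov}(GR,G) + \beta^2\,\op{Var}(G).
\end{equation*}
Minimizing this in $\beta$ yields the generic formulas $\beta^* = \op{Cov}(GR,G)/\op{Var}(G)$ and $V^\dagger = V_0 - \op{Cov}(GR,G)^2/\op{Var}(G)$. The remaining work is to show these reduce to the closed forms stated in the lemma.

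The second step is to compute $\E[G]$ and simplify $\op{Cov}(GR,G)$. Under $\mu$, with $A_k \sim \mu_k(\cdot \mid X)$, a short tower-rule calculation gives $\E[Y_k \mid X] = \sum_{a_k} \pi(a_k \mid X) = 1$, hence $\E[Y_k] = 1$ and therefore $\E[G] = 1$. This immediately gives $\op{Cov}(GR,G) = \E[G^2 R] - \theta$, matching the numerator in the lemma.

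The main obstacle is the denominator: showing that $\op{Var}(G) = \sum_k \op{Var}(Y_k)$, i.e.\ that the pairwise covariances $\op{Cov}(Y_k, Y_j)$ for $k \neq j$ vanish. This is where the factored form of $\mu$ is essential. Conditional on $X$, the slot actions $A_k$ and $A_j$ are independent under $\mu$, and $Y_k$ is a function of $(X, A_k)$ alone while $Y_j$ is a function of $(X, A_j)$ alone. Hence $\E[Y_k Y_j \mid X] = \E[Y_k \mid X]\,\E[Y_j \mid X] = 1$, so $\E[Y_k Y_j] = 1 = \E[Y_k]\E[Y_j]$, making the cross covariance zero. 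Plugging these identifications back into the generic minimizer and minimum value delivers the claimed expressions for $\beta^*$ and $V^\dagger$.
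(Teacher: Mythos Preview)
Your proposal is correct and follows essentially the same approach as the paper: expand $V_\beta$ as a quadratic in $\beta$, minimize, and simplify using $\E[G]=1$ and $\op{Var}(G)=\sum_k\op{Var}(Y_k)$. The only cosmetic difference is that the paper works directly with $\E[GR(G-1)]$ and $\E[(G-1)^2]$ (which equal $\op{Cov}(GR,G)$ and $\op{Var}(G)$ since $\E[G-1]=0$), and it asserts the decomposition $\op{Var}(G)=\sum_k\op{Var}(Y_k)$ without spelling out the cross-covariance argument that you supply.
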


Notice that generally $\beta^*\neq\theta$. That is, while wPI is asymptotically equivalent to a control-variate estimate, it is not generally optimal. In the degenerate case where $R\indep G$ (\eg, constant reward) then we do have $\beta^*=\theta$. 

Our results, nonetheless, immediately suggest how to obtain this optimum asymptotically, using a feasible estimator.
\begin{lemma}[Achieving optimal single control variate]\label{lemma:achievingoptimalsingle}
Compute a data-driven $\beta$ as
\begin{align}
\hat\beta^*_n=\frac{\E_n[GR(G-1)]}{\sum_k\E_n[(Y_k-1)^2]}.
\end{align}
Then,
\begin{align}
\sqrt n(\hat\theta_n^{(\hat\beta^*_n)}-\theta)\to_d\mathcal N(0,V^\dagger).
\end{align}
\end{lemma}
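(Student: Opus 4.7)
The plan is to prove consistency of $\hat\beta^*_n$ for $\beta^*$ and then invoke \cref{lemma:convergence} (with $\hat w_n=(\hat\beta^*_n,\ldots,\hat\beta^*_n)\to_p w=(\beta^*,\ldots,\beta^*)$) together with \cref{lemma:optimalsingle} to identify the limiting variance as $V^\dagger$.

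First I would tackle the numerator of $\hat\beta^*_n$. Expanding gives $\E_n[GR(G-1)] = \E_n[G^2R]-\E_n[GR]$, so by the law of large numbers this converges in probability to $\E[G^2R]-\E[GR]=\E[G^2R]-\theta$, which is precisely the numerator of $\beta^*$ in \cref{lemma:optimalsingle}. This step requires $\E[G^2|R|]<\infty$, which we take as a standing integrability assumption (it is already implicit in $V_0=\op{Var}(GR)<\infty$).

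Next I would handle the denominator. Since $\E[Y_k\mid X]=\sum_{a_k}\mu_k(a_k\mid X)\frac{\pi(a_k\mid X)}{\mu_k(a_k\mid X)}=1$, we have $\E[Y_k]=1$ and hence $\E[(Y_k-1)^2]=\op{Var}(Y_k)$. The LLN thus gives $\sum_k\E_n[(Y_k-1)^2]\to_p\sum_k\op{Var}(Y_k)$. Assuming this limit is strictly positive (the degenerate case $\sum_k\op{Var}(Y_k)=0$ forces every $Y_k$ to be a.s.\ constant equal to $1$, whereupon $G\equiv 1$ and every member of the estimator class collapses to the same value $\E_n[R]$, making the statement trivial), the continuous mapping theorem applied to $(x,y)\mapsto x/y$ yields
\begin{equation*}
\hat\beta^*_n \;\to_p\; \frac{\E[G^2R]-\theta}{\sum_k\op{Var}(Y_k)} \;=\; \beta^*.
\end{equation*}

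Finally, defining $\hat w_n=(\hat\beta^*_n,\ldots,\hat\beta^*_n)$ and $w=(\beta^*,\ldots,\beta^*)$, we have $\hat w_n\to_p w$ coordinatewise, so \cref{lemma:convergence} gives $\sqrt n(\hat\theta_n^{(\hat\beta^*_n)}-\theta)\to_d\mathcal N(0,V_{\beta^*})$, and \cref{lemma:optimalsingle} identifies $V_{\beta^*}=V^\dagger$, completing the proof. The only real subtlety is the degenerate denominator case sketched above; otherwise everything is an immediate chain of LLN, CMT, and Slutsky via the previously established plug-in lemma.
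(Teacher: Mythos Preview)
Your proposal is correct and follows essentially the same approach as the paper: the paper's proof simply states that the result is ``direct from the weak law of large numbers and \cref{lemma:convergence,lemma:optimalsingle},'' and you have unpacked exactly this chain (LLN for numerator and denominator, continuous mapping for the ratio, then the plug-in lemma and the identification $V_{\beta^*}=V^\dagger$). Your additional remarks on the required integrability and the degenerate-denominator case are sensible elaborations but not a different route.
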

That is, asymptotically, $\hat\theta_n^{(\hat\beta^*_n)}$ is at least as good as either PI and wPI. The improvement is positive, in general.
\begin{lemma}\label{lemma:improvement1}
The improvement of $\hat\theta_n^{(\hat\beta^*_n)}$ over PI is
\begin{align}
V_0-V^\dagger=\frac{\E[GR(G-1)]^2}{\E[(G-1)^2]}\geq0 ,
\end{align}
and the improvement over wPI is
\begin{align}
V_{\theta}-V^\dagger=&~\frac{\E[GR(G-1)]^2}{\E[(G-1)^2]}-2\E[GR]\E[G^2R]+\E[GR]^2(2+\E[(G-1)^2])\geq0 .
\end{align}
\end{lemma}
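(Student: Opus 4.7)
The plan is to reduce both identities in Lemma \ref{lemma:improvement1} to the expression for $V^\dagger$ supplied by Lemma \ref{lemma:optimalsingle}, namely $V^\dagger = V_0 - (\E[G^2R]-\theta)^2/\sum_k\op{Var}(Y_k)$, and then do bookkeeping. The key structural fact I will use is that under a factored logging policy, the slot-level ratios $Y_k-1$ are conditionally centered and mutually orthogonal.

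First, I would establish the following facts about $G-1 = \sum_k(Y_k-1)$. Since $\pi(a_k\mid x)$ denotes the slot marginal and $\mu$ is factored, $\Efcond{Y_k}{X}=\sum_{a_k}\mu_k(a_k\mid X)\cdot\pi(a_k\mid X)/\mu_k(a_k\mid X)=1$, so $\E[Y_k-1]=0$ and $\E[G-1]=0$. Moreover, for $j\neq k$, conditional independence of $A_j,A_k$ given $X$ (by factorization of $\mu$) gives $\Efcond{(Y_j-1)(Y_k-1)}{X}=0$, so the cross terms vanish and $\E[(G-1)^2]=\sum_k\E[(Y_k-1)^2]=\sum_k\op{Var}(Y_k)$. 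Also, $\E[GR(G-1)]=\E[G^2R]-\E[GR]=\E[G^2R]-\theta$. Substituting these two identifications into Lemma \ref{lemma:optimalsingle} rewrites
\begin{align}
V_0-V^\dagger=\frac{(\E[G^2R]-\theta)^2}{\sum_k\op{Var}(Y_k)}=\frac{\E[GR(G-1)]^2}{\E[(G-1)^2]}\ge0,
\end{align}
which is the first claim.

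For the second, I would expand the variance of $\Gamma_\theta=GR-\theta(G-1)$ directly. Using $\E[G-1]=0$, we get $\op{Var}(G-1)=\E[(G-1)^2]$ and $\op{Cov}(GR,G-1)=\E[GR(G-1)]=\E[G^2R]-\theta$, hence
\begin{align}
V_\theta=V_0-2\theta(\E[G^2R]-\theta)+\theta^2\E[(G-1)^2].
\end{align}
Subtracting $V^\dagger=V_0-(\E[G^2R]-\theta)^2/\E[(G-1)^2]$ and writing $b=\E[G^2R]-\theta=\E[GR(G-1)]$ and $c=\E[(G-1)^2]$, this collapses to the perfect square
\begin{align}
V_\theta-V^\dagger=\frac{b^2}{c}-2\theta b+\theta^2 c=\bigl(\theta\sqrt{c}-b/\sqrt{c}\bigr)^2\ge0.
\end{align}
Expanding the middle term as $-2\theta b = -2\E[GR]\E[G^2R]+2\E[GR]^2$ and grouping the $\E[GR]^2$ with $\theta^2 c$ matches the stated form $\E[GR(G-1)]^2/\E[(G-1)^2]-2\E[GR]\E[G^2R]+\E[GR]^2(2+\E[(G-1)^2])$.

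Neither step is difficult; the only thing to watch is not to lose the perfect-square structure to an unwieldy expansion, since that is what makes non-negativity transparent. The mild substantive content is the orthogonality $\sum_k\op{Var}(Y_k)=\E[(G-1)^2]$, which is where the factored-$\mu$ assumption enters and allows the denominator in Lemma \ref{lemma:optimalsingle} to be rewritten as the variance of the single control variate $G-1$.
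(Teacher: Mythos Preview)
Your proof is correct and follows essentially the same algebraic route as the paper, which simply says the formulas ``follow by algebra'' from \cref{lemma:optimalsingle}. The one minor difference is in how nonnegativity is argued: the paper invokes the variational definition $V^\dagger=\inf_\beta V_\beta$ and observes that $\beta=0$ and $\beta=\theta$ are feasible, whereas you exhibit the perfect-square structure $(\theta\sqrt c-b/\sqrt c)^2$ explicitly; both are valid, and your version has the small advantage of making the equality case $V_\theta=V^\dagger\iff\beta^*=\theta$ transparent.
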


\subsection{Optimal Multiple Control Variates}

We now turn our attention to optimally choosing weights on all $K$ control variates.
Let 
\begin{equation}\label{eq:voptmulti}V^*=\inf_{w\in\R K}V_{w}.\end{equation}
By construction, $V^*\leq V^\dagger\leq \min(V_0,V_{\theta})$, that is, it is smaller than or equal to the asymptotic variances of PI, wPI, and the optimal single-control-variate estimator. Empirically, we find that the first inequality can have a small gap while the second a large one; that is, optimizing control variates provides substantive improvement over PI and wPI, but much of the improvement is often captured by optimizing a single control variate in certain highly symmetric settings. See \cref{sec:experiments,sec:experiments2}.
\begin{lemma}[Optimal multiple control variates]\label{lemma:optimalmultiple}
We have
\begin{align}
V^*&=V_{w^*}= \sum_{k=1}^K \frac{\E[GR(Y_k-1)]^2}{\op{Var}(Y_k)},
\end{align}
where
\begin{align}
w_k^*&=\frac{\E[GR(Y_k-1)]}{\op{Var}(Y_k)}.
\end{align}
\end{lemma}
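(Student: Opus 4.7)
The plan is to minimize the quadratic form $V_w=\op{Var}(\Gamma_w)$ over $w\in\R K$, exploiting the fact that the control variates $Y_1-1,\dots,Y_K-1$ are pairwise uncorrelated under a factored logging policy. Expanding by bilinearity,
\begin{align*}
V_w = \op{Var}(GR) - 2\sum_{k=1}^K w_k\op{Cov}(GR,Y_k-1) + \sum_{j,k} w_j w_k\op{Cov}(Y_j-1,Y_k-1).
\end{align*}

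The first crucial step is showing that $\op{Cov}(Y_j-1,Y_k-1)=0$ for $j\neq k$. Since $\E[Y_k\mid X]=\sum_{a_k}\mu_k(a_k\mid X)\pi(a_k\mid X)/\mu_k(a_k\mid X)=1$, we have $\E[Y_k]=1$, and it suffices to verify $\E[Y_jY_k]=1$. Because $\mu$ is factored, conditional on $X$ the draws $A_j$ and $A_k$ are independent, so $\E[Y_jY_k\mid X]=\E[Y_j\mid X]\E[Y_k\mid X]=1$, and taking outer expectation finishes this step. I also use $\E[Y_k-1]=0$ to replace covariances $\op{Cov}(GR,Y_k-1)$ by raw expectations $\E[GR(Y_k-1)]$.

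With these two simplifications, $V_w$ decouples into a sum of one-dimensional quadratics,
\begin{align*}
V_w = V_0 - 2\sum_{k=1}^K w_k\,\E[GR(Y_k-1)] + \sum_{k=1}^K w_k^2\,\op{Var}(Y_k),
\end{align*}
which is minimized coordinatewise by differentiating and setting each derivative to zero, yielding $w_k^*=\E[GR(Y_k-1)]/\op{Var}(Y_k)$ as stated. Substituting back produces $V^*=V_0-\sum_{k=1}^K\E[GR(Y_k-1)]^2/\op{Var}(Y_k)$; the stated identity then follows by reading off the reduction term (or, equivalently, reading the statement as the expression for $V_0-V^*$, as in the single-variate lemma).

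The main obstacle is the decorrelation of the per-slot ratios, and here the only subtle point is being careful that the target policy $\pi$ need not be factored: the argument works because it is the \emph{logging} policy $\mu$ that factors, which renders $A_1,\dots,A_K$ conditionally independent given $X$ and thereby makes $\E[Y_jY_k\mid X]$ factorize, regardless of any correlations $\pi$ induces across slots. Once this conditional independence is in hand, the remainder is routine coordinatewise optimization of a separable convex quadratic.
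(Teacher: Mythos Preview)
Your proof is correct and follows essentially the same route as the paper: expand $V_w$ as a quadratic in $w$, use the pairwise decorrelation $\op{Cov}(Y_j,Y_k)=0$ for $j\neq k$ (which the paper states without the conditional-independence argument you spell out), and then solve the first-order conditions. The paper writes this in matrix form, $w^*=\E[CC\tr]^{-1}\E[GRC]$ with $C=(Y_1-1,\dots,Y_K-1)$, and then invokes the diagonal structure of $\E[CC\tr]$, whereas you go componentwise from the start; these are cosmetic differences. Your observation that the displayed formula for $V^*$ in the lemma is actually the expression for $V_0-V^*$ (consistent with \cref{lemma:improvement2}) is also correct.
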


Again, if $R \indep Y_k$, we have $w_k=\theta$ for all $k$. Generally, $w_k$ vary over $k$ and are not equal to $\theta$.

As in the single control-variate case above, we can obtain the above optimum asymptotically using a feasible estimator.
\begin{lemma}[Achieving optimal multiple control variates]\label{lemma:achievingoptimalmultiple}
Compute data-driven weights $w_k$ as
\begin{align}
\hat w^*_{n,k}=\frac{\E_n[GR(Y_k-1)]}{\E_n[(Y_k-1)^2]}.
\end{align}
Then,
\begin{align}
\sqrt n(\hat\theta_n^{(\hat w_n^*)}-\theta)\to_d\mathcal N(0,V^*).
\end{align}
\end{lemma}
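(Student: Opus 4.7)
The plan is to reduce \cref{lemma:achievingoptimalmultiple} directly to \cref{lemma:convergence} (the plug-in asymptotics for $\hat\theta_n^{(w)}$). If we can establish $\hat w^*_n\to_p w^*$ in $\R K$, where $w^*$ is the fixed optimum identified in \cref{lemma:optimalmultiple}, then \cref{lemma:convergence} immediately delivers $\sqrt n(\hat\theta_n^{(\hat w^*_n)}-\theta)\to_d\mathcal N(0,V_{w^*})$, and \cref{lemma:optimalmultiple} identifies $V_{w^*}=V^*$, which is exactly the claim.

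So the real work is the componentwise convergence $\hat w^*_{n,k}\to_p w^*_k$. The weak law of large numbers applied to the iid triplets $(X_i,A_i,R_i)$ gives $\E_n[GR(Y_k-1)]\to_p \E[GR(Y_k-1)]$ and $\E_n[(Y_k-1)^2]\to_p \E[(Y_k-1)^2]$, under the implicit second-moment conditions needed for the variances in the earlier lemmas to be finite (Cauchy--Schwarz bounds $\E|GR(Y_k-1)|$ in terms of these). Because $\mu$ is factored, $\E[Y_k\mid X]=\sum_{a_k}\mu_k(a_k\mid X)\pi(a_k\mid X)/\mu_k(a_k\mid X)=\sum_{a_k}\pi(a_k\mid X)=1$, hence $\E[(Y_k-1)^2]=\op{Var}(Y_k)$. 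Under the mild non-degeneracy condition $\op{Var}(Y_k)>0$, the continuous mapping theorem applied to $(u,v)\mapsto u/v$ yields $\hat w^*_{n,k}\to_p w^*_k$, and stacking the $K$ coordinates produces the required joint convergence in $\R K$.

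The single delicate point, and the only genuine obstacle, is this non-degeneracy assumption: if some $Y_k$ were almost-surely constant, it would necessarily equal $1$, the control variate $Y_k-1$ would vanish identically, and the corresponding term could simply be dropped from the estimator without affecting any member of the class $\hat\theta_n^{(w)}$. Modulo this trivial exclusion, the argument is a routine plug-in asymptotics — no ingredients beyond \cref{lemma:convergence}, \cref{lemma:optimalmultiple}, the LLN, and the continuous mapping theorem are required.
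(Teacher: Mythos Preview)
Your proposal is correct and follows exactly the paper's route: the paper's proof is the single sentence ``This is direct from the weak law of large numbers and \cref{lemma:convergence,lemma:optimalmultiple},'' and you have simply unpacked that sentence, showing via the LLN and the continuous mapping theorem that $\hat w^*_{n,k}\to_p w^*_k$, then invoking \cref{lemma:convergence} and \cref{lemma:optimalmultiple}. Your discussion of the degenerate case $\op{Var}(Y_k)=0$ is a reasonable caveat that the paper leaves implicit.
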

\begin{lemma}\label{lemma:improvement2}
The improvement of $\hat\theta_n^{(\hat w_n^*)}$ over PI is
\begin{align}
V_0-V^*=\sum_k\frac{\E[GR(Y_k-1)]^2}{\E[(Y_k-1)^2]}\geq0 .
\end{align}
The improvement over the single control variate $\hat\theta_n^{(\hat\beta^*_n)}$ is
\begin{align}
V^\dagger-V^*=\sum_k\frac{\E[GR(Y_k-1)]^2}{\E[(Y_k-1)^2]}-\frac{\E[GR(G-1)]^2}{\E[(G-1)^2]}\geq0 .
\end{align}
\end{lemma}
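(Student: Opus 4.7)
The plan is to reduce both claims to explicit algebraic identities using the characterizations of $V^*$ and $V^\dagger$ from Lemmas~\ref{lemma:optimalsingle} and \ref{lemma:optimalmultiple}, and then verify non-negativity by a Cauchy--Schwarz argument. The key preliminary observation I need is that, under a factored logging policy, $\E[Y_k]=1$ for every $k$ and $\op{Cov}(Y_j,Y_k)=0$ for $j\ne k$; both follow immediately from $\E[Y_j Y_k\mid X] = \E[Y_j\mid X]\E[Y_k\mid X] = 1$, using that the slots of $A$ are conditionally independent given $X$ under $\mu$. Writing $a_k=\E[GR(Y_k-1)]$ and $b_k=\E[(Y_k-1)^2]=\op{Var}(Y_k)>0$, and using $G-1=\sum_k(Y_k-1)$ together with the centeredness and pairwise uncorrelation of the $(Y_k-1)$, I get the two identities I will use repeatedly:
\begin{align*}
\E[(G-1)^2]=\sum_k b_k, \qquad \E[GR(G-1)]=\sum_k a_k.
\end{align*}

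Given these identities, the first claim is routine. Plugging the optimal weights $w_k^*=a_k/b_k$ from Lemma~\ref{lemma:optimalmultiple} into the decoupled quadratic $V_w=V_0-2\sum_k w_k a_k+\sum_k w_k^2 b_k$ (whose cross terms vanish precisely by pairwise uncorrelation) yields $V^*=V_0-\sum_k a_k^2/b_k$, and hence $V_0-V^* = \sum_k a_k^2/b_k \geq 0$ as a sum of nonnegative ratios.

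For the second claim, Lemma~\ref{lemma:optimalsingle} gives $V^\dagger=V_0-\E[GR(G-1)]^2/\E[(G-1)^2]$, which by the identities above equals $V_0-(\sum_k a_k)^2/\sum_k b_k$. Subtracting,
\begin{align*}
V^\dagger - V^* = \sum_k \frac{a_k^2}{b_k} - \frac{(\sum_k a_k)^2}{\sum_k b_k}.
\end{align*}
Non-negativity is then exactly the Engel (Titu) form of the Cauchy--Schwarz inequality: taking $u_k=a_k/\sqrt{b_k}$ and $v_k=\sqrt{b_k}$ in $(\sum_k u_k v_k)^2\le(\sum_k u_k^2)(\sum_k v_k^2)$ gives $(\sum_k a_k)^2\le (\sum_k a_k^2/b_k)(\sum_k b_k)$, which upon dividing by $\sum_k b_k$ delivers the required inequality.

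The only non-bookkeeping step is recognizing the Cauchy--Schwarz structure in the second claim; the uncorrelation of the $Y_k$'s, which both decouples the optimization in Lemma~\ref{lemma:optimalmultiple} and lets $G-1$ inherit its second moment from the $(Y_k-1)$'s, is already implicit in the earlier lemmas, so importing it here is not really an obstacle. The main thing to take care about is keeping the signs and the distinction between the denominators $\op{Var}(Y_k)$ and $\E[(Y_k-1)^2]$ straight, but these coincide because $\E[Y_k-1]=0$.
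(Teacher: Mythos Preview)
Your proof is correct. The derivation of the explicit expressions for $V_0-V^*$ and $V^\dagger-V^*$ matches the paper's route (``algebra'' from Lemmas~\ref{lemma:optimalsingle} and \ref{lemma:optimalmultiple}), and your preliminary identities $\E[(G-1)^2]=\sum_k b_k$ and $\E[GR(G-1)]=\sum_k a_k$ are exactly the right simplifications.

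Where you differ from the paper is in how you justify the inequalities. The paper does not invoke Cauchy--Schwarz; it simply observes that setting $w_1=\cdots=w_K=0$ and $w_1=\cdots=w_K=\beta^*$ are both feasible in the optimization \eqref{eq:voptmulti} defining $V^*$, so $V^*\le V_0$ and $V^*\le V^\dagger$ follow immediately from the nesting of the two classes. Your Cauchy--Schwarz (Engel form) argument is a perfectly valid alternative and is in fact equivalent: the inequality $\sum_k a_k^2/b_k \ge (\sum_k a_k)^2/\sum_k b_k$ is precisely the statement that optimizing each coordinate freely beats constraining them to be equal in the decoupled quadratic. The paper's one-line feasibility argument is shorter and more conceptual, while your approach has the minor bonus of making the equality case transparent (equality iff $a_k/b_k$ is constant in $k$, i.e., the optimal per-slot weights $w_k^*$ all coincide with $\beta^*$).
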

This improvement is of course no smaller than the improvement over wPI, which is no better than the optimal single control variate (asymptotically). The improvement again collapses to zero in the special case of constant rewards.


\section{Achieving Optimality Without Suffering Bias}

Although the above estimators have asymptotically optimal variance among control variates, they may incur finite-sample bias due to the estimation of control variate weights.
We can avoid this using a three-way cross-fitting. The estimator we construct in this section will be \emph{both} finite-sample unbiased \emph{and} have optimal variance asymptotically.

The estimator we propose is as follows:
\begin{enumerate}
\item Split the data randomly into three even folds, $\mathcal D_0,\mathcal D_1,\mathcal D_2$.
\item For $j=0,1,2$, compute $\hat w^{(j)}_{n,k}=\frac{\E_{\mathcal D_j}[GR(Y_k-1)]}{\E_{\mathcal D_j}[(Y_k-1)^2]}$, where $\E_{\mathcal D_j}$ refers to the sample average only over $\mathcal D_j$.
\item Set $$\hspace{-1em}\hat \theta^{*}_n=\E_{n}[GR]-\sum_k\sum_{j=0,1,2} \frac{\abs{\mathcal D_j}}n\hat w^{(j+1\;\op{mod}\;3)}_{n,k} \E_{\mathcal D_j}[Y_k-1].$$
\end{enumerate}

\begin{lemma}[Unbiased estimator with optimal asymptotic variance]\label{lemma:unbiasedsplit}
We have
\begin{align}
\E [\hat \theta^{*}_n ] &=\theta ,
\qquad
\sqrt n(\hat \theta^{*}_n-\theta)\to_d\mathcal N(0,V^*) .
\end{align}
\end{lemma}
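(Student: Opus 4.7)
The plan has two parts: exact unbiasedness at every $n$ via cross-fit independence, and convergence to $\mathcal N(0,V^*)$ via a Slutsky-type linearization around the oracle estimator $\hat\theta_n^{(w^*)}$.

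For unbiasedness, I would first record the key preliminary fact that $\E[Y_k-1]=0$ under a factored $\mu$, since $\E[Y_k\mid X]=\sum_{a_k}\mu_k(a_k\mid X)\,\pi(a_k\mid X)/\mu_k(a_k\mid X)=1$, and hence $\E[GR]=\theta$ by definition. Because the three folds are disjoint iid sub-samples, $\hat w^{(j+1\bmod 3)}_{n,k}$ is a function only of $\mathcal D_{j+1\bmod 3}$ and is therefore independent of $\E_{\mathcal D_j}[Y_k-1]$. Conditioning on $\mathcal D_{j+1\bmod 3}$ and factoring out what is constant gives
\begin{equation*}
\E\bigl[\hat w^{(j+1\bmod 3)}_{n,k}\,\E_{\mathcal D_j}[Y_k-1]\bigr]=\E\bigl[\hat w^{(j+1\bmod 3)}_{n,k}\bigr]\cdot\E[Y_k-1]=0.
\end{equation*}
Every term in the subtracted double sum therefore has mean zero, and $\E[\hat\theta^*_n]=\E[\E_n[GR]]=\theta$ exactly at any $n$.

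For the asymptotic variance, I would add and subtract the oracle weights $w^*_k$ inside the subtracted sum in order to write
\begin{align*}
\hat\theta^*_n-\theta&=\bigl(\E_n[\Gamma_{w^*}]-\theta\bigr)-\sum_{k}\sum_{j=0,1,2}\frac{|\mathcal D_j|}{n}\bigl(\hat w^{(j+1\bmod 3)}_{n,k}-w^*_k\bigr)\E_{\mathcal D_j}[Y_k-1].
\end{align*}
Scaled by $\sqrt n$, the leading term converges in distribution to $\mathcal N(0,V^*)$ by the ordinary CLT applied to the iid summands $\Gamma_{w^*}$ (this is exactly Lemma~\ref{lemma:unbiasedness} at $w=w^*$, combined with Lemma~\ref{lemma:optimalmultiple}). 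For the remainder, each fold has size $\asymp n/3$, so LLN and continuous mapping applied to the ratio defining $\hat w^{(j+1\bmod 3)}_{n,k}$ yield $\hat w^{(j+1\bmod 3)}_{n,k}\to_p w^*_k$; meanwhile, since $\E[Y_k-1]=0$, a fold-level CLT gives $\sqrt n\,\E_{\mathcal D_j}[Y_k-1]=O_p(1)$. An $o_p(1)\cdot O_p(1)$ product is $o_p(1)$ at the $\sqrt n$ scale for each $(j,k)$, and $|\mathcal D_j|/n\to 1/3$, so the whole remainder is $o_p(1)$. Slutsky's theorem then delivers the stated limit.

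The main obstacle is not any single calculation but organizing the cross-fit bookkeeping so that the same three-way split simultaneously supports the independence argument for unbiasedness and the $o_p(1)$ control of the remainder: pairing the weights estimated on $\mathcal D_{j+1\bmod 3}$ with the zero-mean average over $\mathcal D_j$ is precisely what achieves both, and two folds would not suffice for the full cycle. A minor subtlety worth flagging at the outset is integrability of $\hat w^{(j)}_{n,k}$, whose denominator $\E_{\mathcal D_j}[(Y_k-1)^2]$ could in principle vanish; the standing positivity of $\mu_k$ that is implicit throughout the paper (needed for well-defined importance ratios) rules this out for large $n$ and is the natural regularity assumption to cite.
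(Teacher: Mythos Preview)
Your proof is correct. The unbiasedness argument is identical to the paper's. For the asymptotic normality, however, you take a genuinely different and more direct route: you linearize around the oracle $\hat\theta_n^{(w^*)}$ and kill the remainder with an $o_p(1)\cdot O_p(1)$ Slutsky argument, exactly in the spirit of \cref{lemma:convergence}. The paper instead works fold by fold: it first shows that for $i\neq i'$ the summands $U_i=Z_i-(\hat w_n^{(m(j(i)))})^\top C_i$ satisfy $\op{Cov}(U_i,U_{i'})=0$ exactly, then applies a multivariate CLT to the three fold-level averages (with Slutsky replacing $\hat w_n^{(m(j))}$ by $w^*$) and combines them. The zero-covariance step is where the three-fold cycle is essential in the paper's argument: with only two folds one can find $i,i'$ such that each lies in the fold used to fit the other's weight, and the cross term $\E[(\hat w_n^{(m(j(i)))})^\top C_i\,(\hat w_n^{(m(j(i')))})^\top C_{i'}]$ need not vanish.

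This brings up the one point where your commentary overshoots. Your Slutsky remainder argument does not actually use the three-fold structure at all; it would go through verbatim with two folds (indeed, even with no splitting, which is just \cref{lemma:achievingoptimalmultiple}). So your closing remark that ``two folds would not suffice for the full cycle'' is not supported by your own proof. What three folds buy, and what the paper's proof makes explicit, is the exact finite-sample orthogonality $\op{Cov}(U_i,U_{i'})=0$, a stronger property than the asymptotic statement in the lemma. Your approach is cleaner for proving the lemma as stated; the paper's approach additionally explains the design choice of three folds. The integrability caveat you flag about the denominator $\E_{\mathcal D_j}[(Y_k-1)^2]$ is a fair technical point that the paper also leaves implicit.
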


While using just two folds would suffice to eliminate bias, ensuring that each data point $i$ is independent from the data used to fit its weight, this would not suffice for ensuring we get the optimal variance. For this, it is crucial that we use three folds so as to eliminate covariance as well. Having three folds ensures that, given any two data points $i$ and $i'$, either $i$ is disjoint from the data used to fit the weights of $i'$ or vice versa. That is, for any two folds $j$ and $j'$, we always have either $j+1\neq j'\;(\mathrm{mod}~3)$ or $j'\neq j+1\;(\mathrm{mod}~3)$. 

In summary, \cref{lemma:unbiasedsplit} establishes that we can completely avoid bias even in finite samples. We could in fact adapt our cross-fitting procedure to obtain alternative estimators that are both finite-sample unbiased and match the asymptotic variance of the single-control-variate version or even wPI; for that we would need only change the weights computed in step 2. 
We note, however, that the variance characterization in all of our results is asymptotic and we do not currently have corresponding finite-sample results.

\section{Experiments on real data}\label{sec:experiments}

We have benchmarked the proposed estimators on the publicly available dataset\footnote{This dataset is available for download from https://www.microsoft.com/en-us/research/project/mslr/, is distributed under Standard MSR License Agreement, and does not contain any personally identifiable information.} MSLR-WEB30K from the Microsoft Learning to Rank Challenge \citep{qin2013introducing}. This is a labeled dataset that contains about 31k user queries, each providing up to 1251 labeled documents (the labels are relevance scores from 0 to 4). The queries form the contexts $x$ of our OPE problem. 
In order to be able to provide a head-to-head comparison with the results reported by \citet{swaminathan2017off}, we have closely followed the experimental protocol of the latter (with minor differences, explained next) in order to generate contextual slate bandit instances from the data. 
Each query-document pair in the dataset is annotated with `title' and `body' features.
As a preprocessing step, we first use all data to train a regression tree that predicts document scores from the `title' features of query-document pairs.\footnote{We used sklearn.tree.DecisionTreeRegressor(criterion="mse", splitter="random", min\_samples\_split=4, min\_samples\_leaf=4), 
optimized using sklearn.model\_selection.GridSearchCV with (main parameters) max\_depth 3 and cv=3. See the provided code for more details.
}
We then use this regressor as a standard greedy ranker to extract, for each query, the top-$M$ predicted documents (with $M \in \{10, 50, 100\})$. Finally, we discard all queries that have less than $M$ judged documents. (This latter step was not present in the experimental pipeline of \citealp{swaminathan2017off}. We include it here because it facilitates sampling with replacement, which is required in our experiments. Even for $M=100$, the resulting dataset contains a sizable number (18k+) of unique queries, and running the PI and wPI estimators against these data attained essentially the same performance as with the unfiltered dataset.)  

The above procedure defines the sets $A(x)$ of `allowed' documents per context $x$.
In each problem instance, we train a new predictor (using decision trees or lasso\footnote{We used
sklearn.linear\_model.Lasso(fit\_intercept=False, max\_iter=500, tol=1e-4, normalize=False, precompute=False, copy\_X=False, warm\_start=False, positive=False, random\_state=None, selection=‘random’), optimized using sklearn.model\_selection.GridSearchCV with (main parameters) max\_depth 3 and cv=3.})
constrained on the sets $A(x)$, and use it as a greedy ranker to extract, for each query, the top-$K$ predicted documents (with $K \in \{5, 10, 30\}$). 
This mapping defines a deterministic target policy $\pi(x)$ for our OPE problem: each $K$-size ranking defines a $K$-slot slate, with each slot having cardinality $M$. Note that each document can appear only once in the slates mapped by $\pi(x)$. The logging policy $\mu(x)$ is uniform, and it samples $K$ documents \emph{with replacement} from the set $A(x)$, for each context $x$. (In \citealp{swaminathan2017off}, documents were sampled without replacement by the logging policy.)
To create each logged dataset, we sampled the $x$ uniformly from the set of all non-filtered queries, up to the desired sample size (which, as in \citealp{swaminathan2017off}, could end up reusing some queries multiple times).

We report results for PI, wPI, and the optimal single control variate estimator $\hat\theta_n^{(\hat\beta^*_n)}$ from \cref{lemma:achievingoptimalsingle}, denoted PICVs. 
The multiple control variate  estimator 
from \cref{lemma:achievingoptimalmultiple} obtained near-identical MSE as the reported PICVs, so we omit it from the plots. (The similar performance of the single vs multiple control variate estimators in this problem is owed to the symmetry between slots; for example, each having the same number of actions. In the next section we show an example where the two estimators exhibit different behavior.)
We evaluate each estimator using (log) root mean square error (RMSE) as a function of sample size. We estimate MSE and standard errors of $\log$(RMSE) (the latter computed via the delta method on $\frac{\log(\text{MSE})}{2\log(10)}$) using 300 independent runs for each setting. As in \citet{swaminathan2017off}, we repeat the protocol for a number of different experimental conditions, varying the values of $M,K$, the regression model for the target policy (tree-based or lasso), and the choice of metric.\footnote{For all our experiments we employed a total of about 1k quad-core machines on the cloud, each running for about two hours on average.}
We tested two metrics, NDCG (additive) and ERR (non-additive); see \citet{swaminathan2017off} for definitions. The version of NDCG that we used is tailored to rankings with replacements: It only differs to standard NDCG in that the denominator is the DCG of a slate that is formed by the globally most relevant document (for the given $x$) replicated over all $K$ slots of the slate (capturing the fact that documents are sampled with replacement in $\mu$). Note that this version of NDCG is additive over slots, which is a requirement for PI (and, asymptotically, for our PICVs estimator) to be unbiased when the estimand is the value of the target policy. The code for these experiments is publicly available at 
\url{https://github.com/fernandoamat/slateOPE}.

\begin{figure}[t!]\centering%
\includegraphics[width=0.245\linewidth]{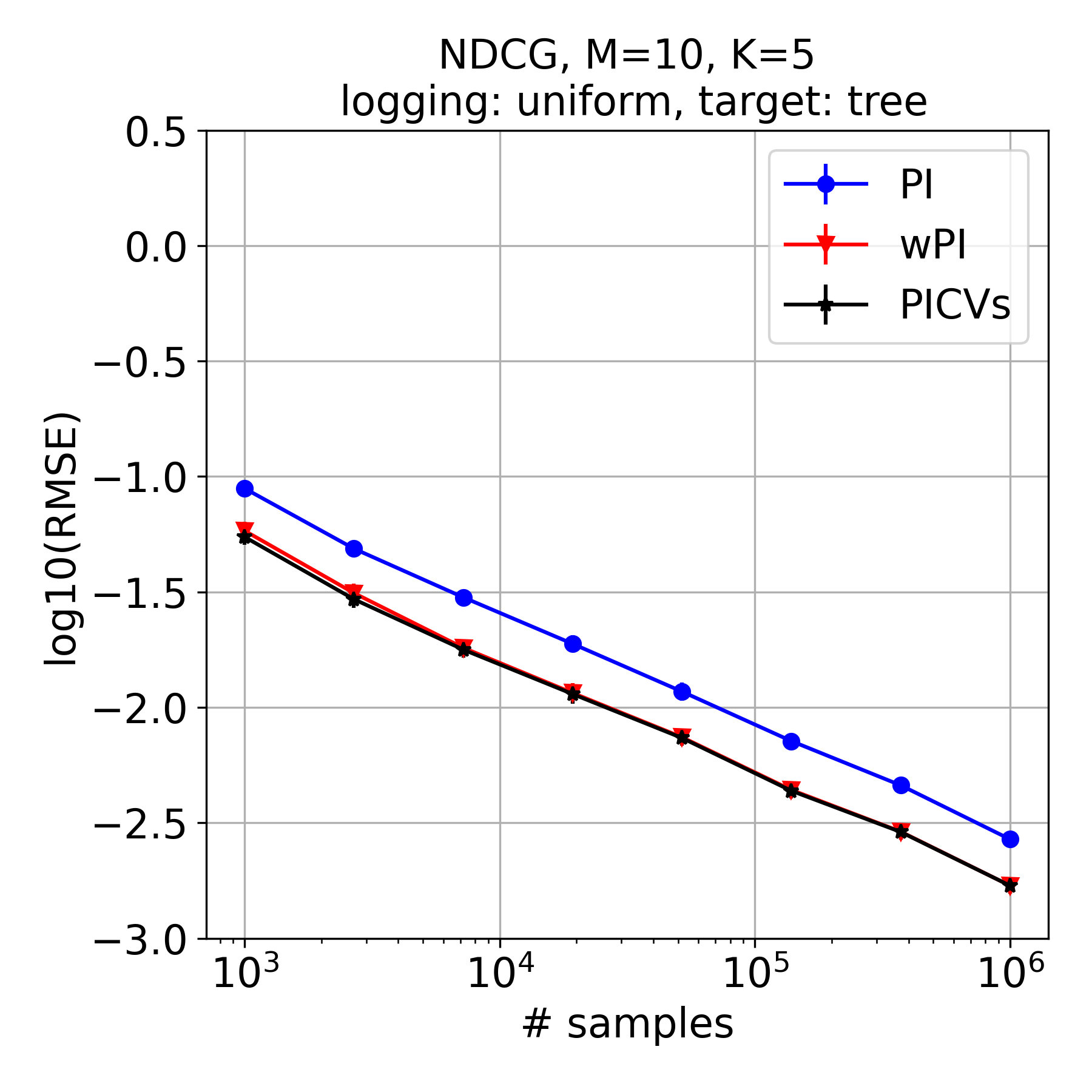} \includegraphics[width=0.245\linewidth]{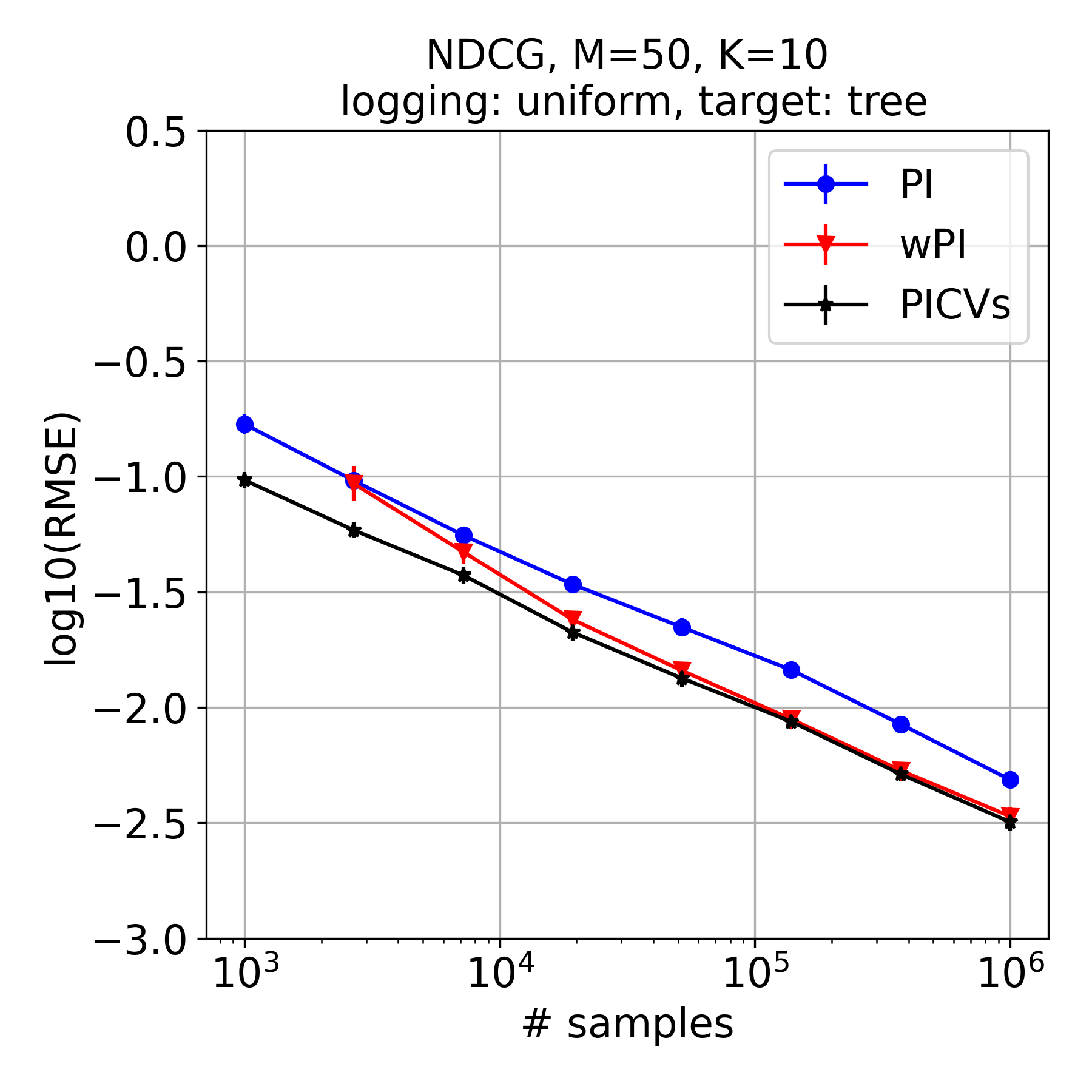} 
\includegraphics[width=0.245\linewidth]{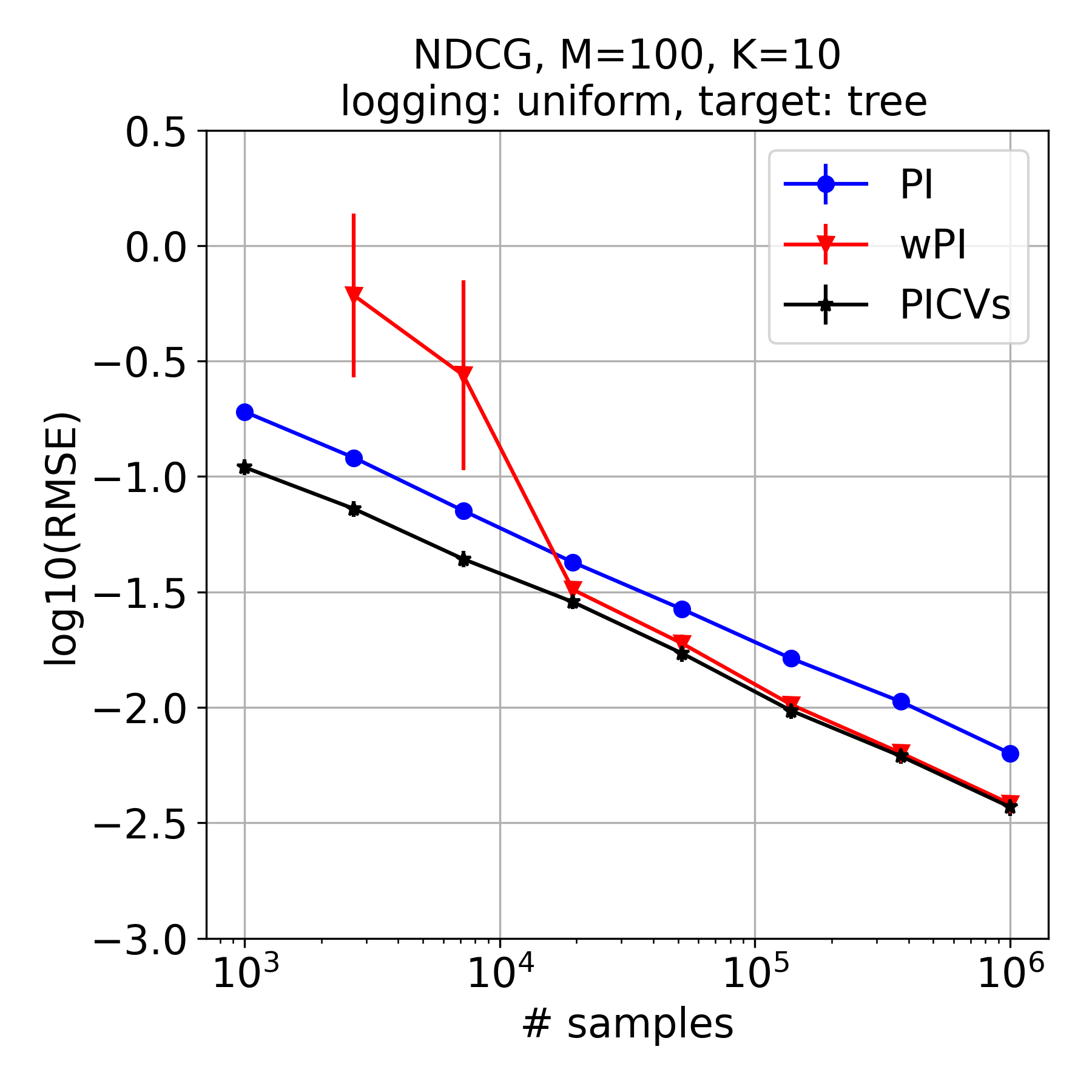} 
\includegraphics[width=0.245\linewidth]{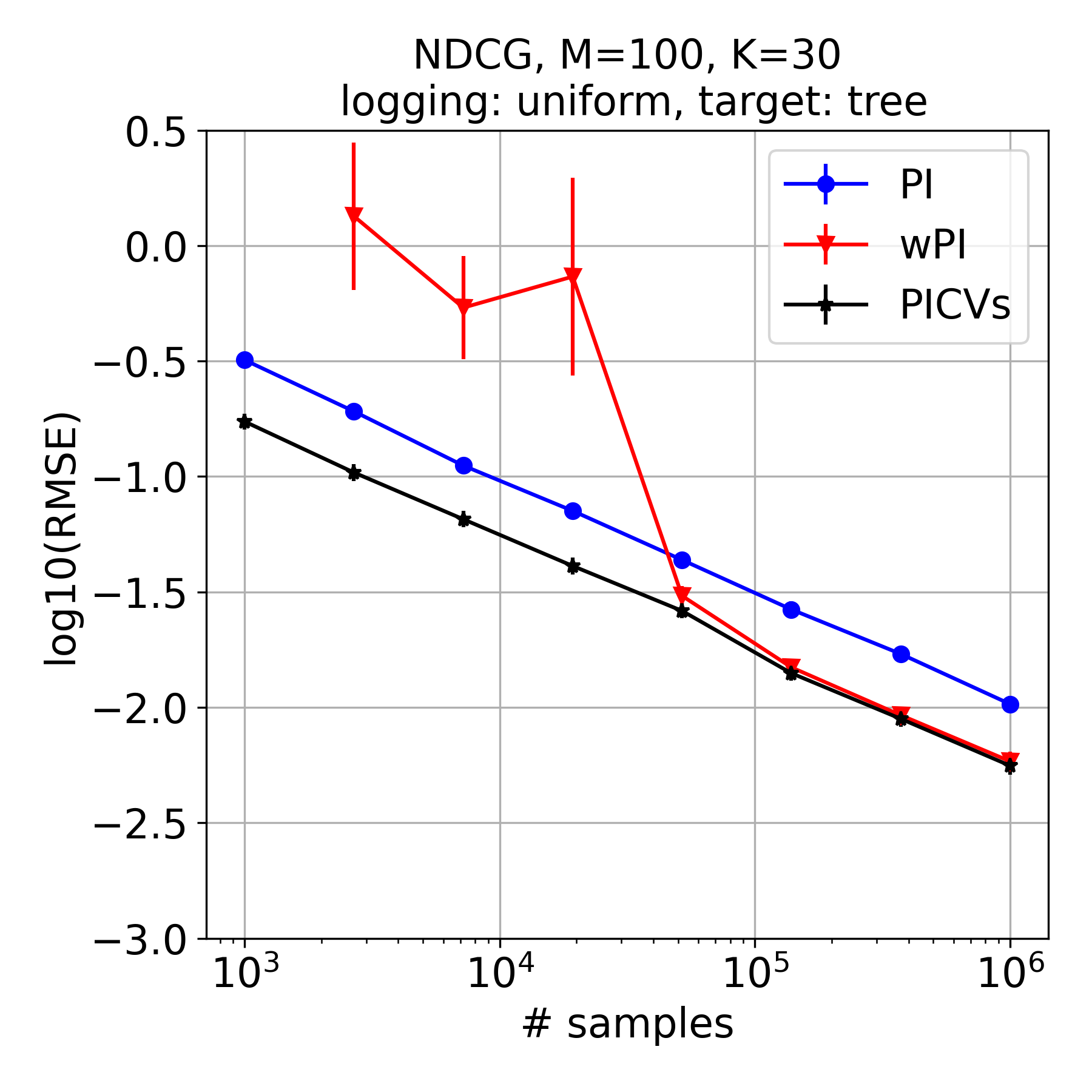} \\
\smallskip 
\includegraphics[width=0.245\linewidth]{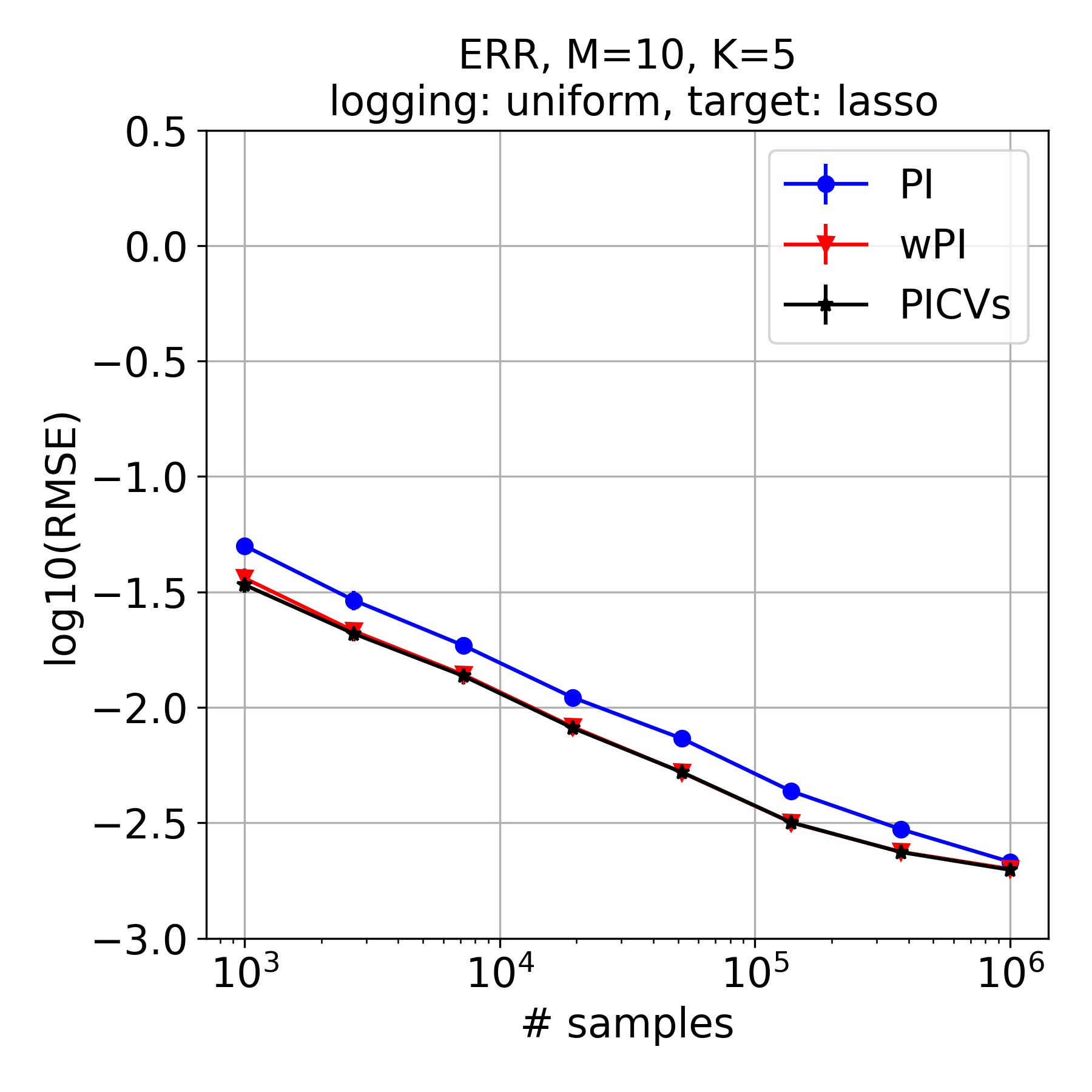} \includegraphics[width=0.245\linewidth]{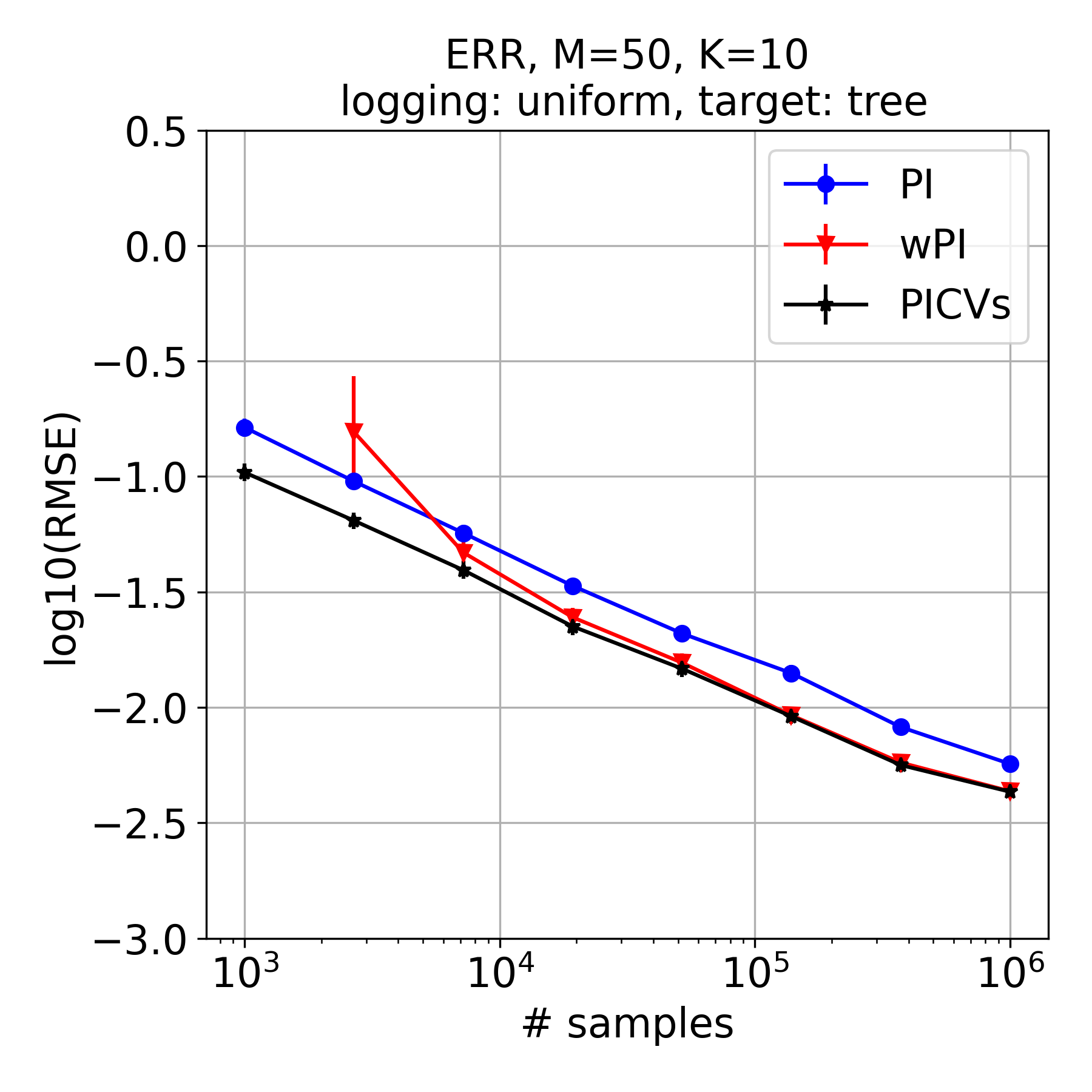} 
\includegraphics[width=0.245\linewidth]{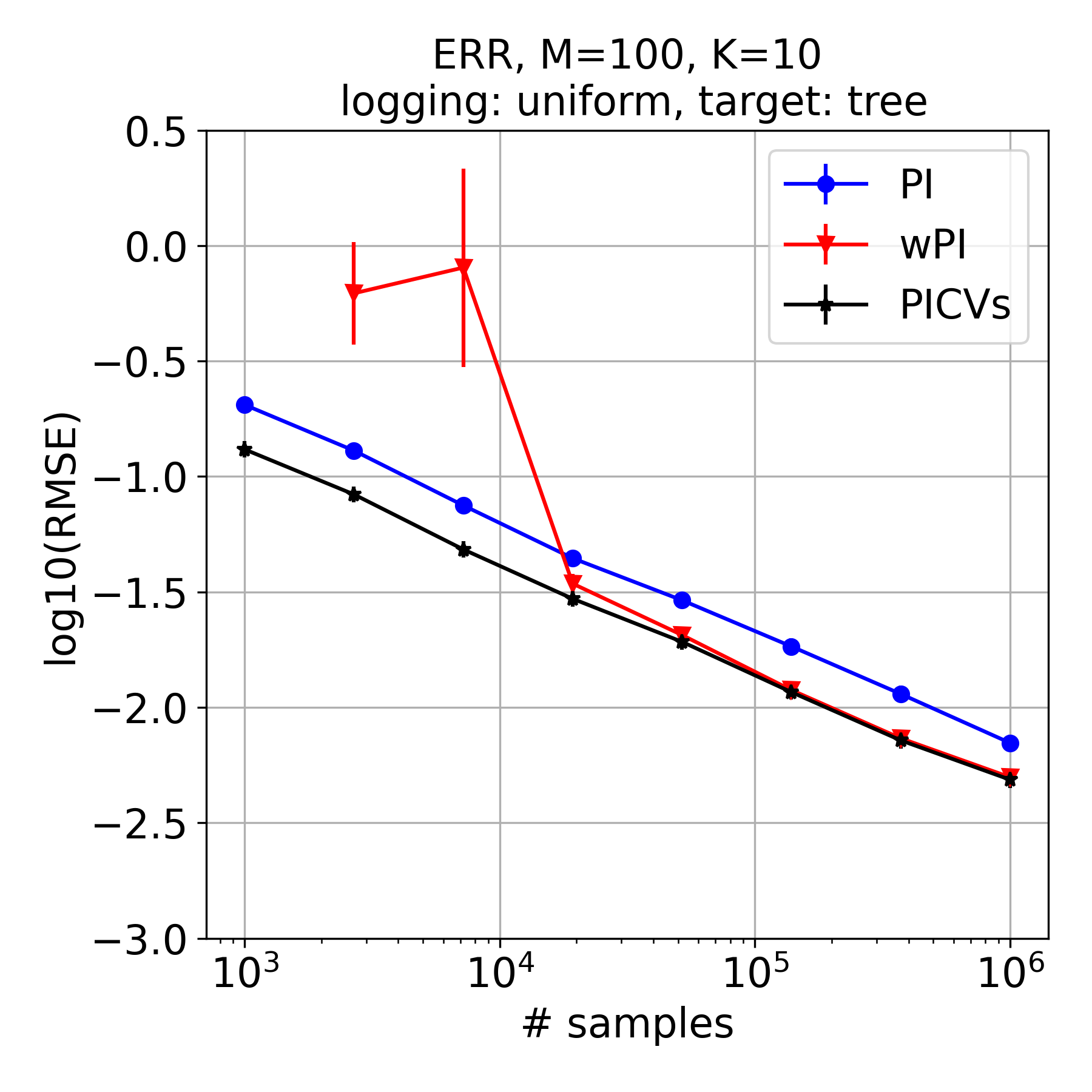} 
\includegraphics[width=0.245\linewidth]{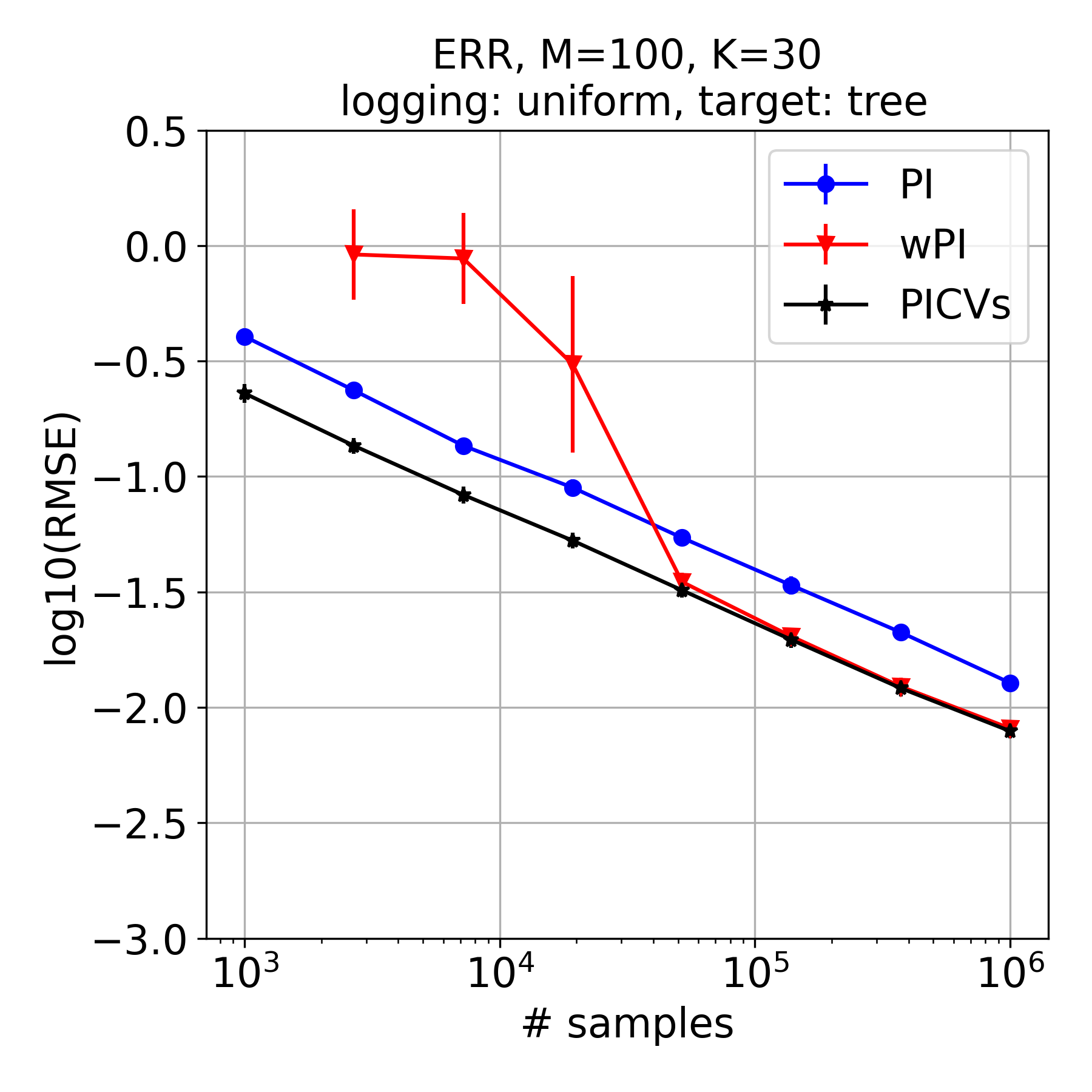}
\caption{Benchmarking the proposed PICVs estimator against PI and wPI on the MSLR-WEB30K data, using a tree-based regression model for the target policy. Here we vary $M$, the number of available actions (documents) per slot, $K$, the number of slots (size of ranked lists),  and the choice of metric that defines slate-level reward. Missing values for wPI (for  sample size 1k) are due to excessive variance caused by the presence of outliers. See text for details.}\label{fig:mslr}%
\end{figure}

In Fig.~\ref{fig:mslr} we show the results for the NDCG and ERR metrics when using a tree-based predictor to define the target policy. (See Appendix for results when using lasso.)
We observe that the proposed estimator PICVs dominates both PI and wPI, in both metrics, with notable improvements for small sample sizes relative to $M$. 

For relatively small sample sizes, wPI would often return abnormally large estimates (values up to four orders of magnitude larger than the median). This can happen when the denominator of wPI gets, by chance, close to zero. This in an artifact of wPI being defined as the ratio of estimates and can lead to very high variance---even higher than PI's.
Our proposed PICVs is not prone to such outliers and the ensuing variance, and is robust uniformly over all the sample sizes tested.

While wPI seems to converge to PICVs for large sample sizes, this need not be the case in general.
\Cref{lemma:convergence2} shows that wPI has asymptotic variance $V_{\theta}$, that is, wPI is equivalent to using the weight $\beta=\theta$ on the single control variate. 
Since, by definition, 
$V_{\theta} \geq V^\dagger=V_{\beta^*}$, it is not true in general that wPI should converge to PICVs, at least in the sense of having similar asymptotic distributions, unless certain special conditions hold so that these variances are equal. The estimators do appear similar for large sample sizes in Fig.~\ref{fig:mslr} due to the nature of the particular dataset and the reported setup (choice of metric, target policy, and value of $K$); see Appendix for different problem settings, where we can observe a uniform gap between PICV and (w)PI.

Our theoretical results support the observation that the MSE of the various estimators in Fig.~\ref{fig:mslr} wil asymptotically appear as parallel curves that never catch up. Note that the plots in Fig.~\ref{fig:mslr} are on a log-log scale (log(MSE) vs log(\# samples)). Our theory predicts that, for large enough sample sizes $n$, the MSE of the different estimators will be $V/n$ for different values of $V$. Since $\log(V/n) = \log(V) - \log(n)$, we expect all methods to eventually appear linear in the log-log scale with the very same slope (hence parallel), but at different heights given by the corresponding $V$ (hence never catch up). The fact that PICVs is always at the bottom is consistent with \cref{lemma:achievingoptimalsingle}, which shows that PICVs attains the smallest asymptotic variance among all single-control-variate estimators.



\section{Experiments on synthetic data: The gap between PICV single and multi}\label{sec:experiments2}

The results on the MSLR-WEB30K data do not demonstrate any substantial differentiation between the single and multi variants of the proposed PICV estimators. This is mainly due to symmetry between slots, which need not always be the case.
In this section we show results from synthetic simulations on a non-contextual slate bandit problem,
using a reward model that is skewed toward the first slot of the slate and toward the target policy (which picks action 0 from each slot). 
This setting reveals a differential behavior of the single vs the multi PICV variants.
Here we assume Bernoulli slate-level rewards, and the slate-level Bernoulli rates $p(a)$ are given by the weighted sum
\begin{align}\textstyle
     p(a) = (0.5)^{a_1 - 1} \phi_1(a_1) + 0.01 \sum_{k=2}^K \phi_k(a_k) , \label{eq:Psum2}
 \end{align}
where the slot-level actions are $a_k \in \{1,\ldots,d_k\}$, for $k=1,\ldots,K$.
Under this model, slot-1 actions that are closer to action $1$ are conferring more impact to the slate-level reward, and hence we expect the latter to be more correlated with the target policy. This allows us to examine if there is any differential behavior between the single (PICVs) and the multi (PICVm) variants. 

In each simulation experiment, we generate $T=20$ random reward tensors $p(a) \in [0,1]^{D}$ using \eqref{eq:Psum2}, 
where, for each $a_k$, the value of $\phi_k(a_k)$ is drawn from a Gaussian distribution $\mathcal{N}(0.2 / K, 0.01)$.
The parameters of each simulated instance are the number $N$ of logged slates, the number of slots $K$, and the slot action cardinality tuple $D=[d_1,...,d_K]$ where $d_i$ is the number of available actions of the $i$-th slot.
For each sampled tensor, we generate 300 datasets by drawing slates $a \sim \mu(\cdot)$ using a uniform logging policy $\mu(a)$ and drawing Bernoulli rewards from the corresponding $p(a)$. We use each dataset to evaluate a deterministic policy $\pi(a)=\mathbb{I}(a = [0,...,0])$ (w.l.o.g.) using each of the candidate estimators. Since we know the ground truth, we can compute the MSE of each estimator. We report (log) average root mean square error (RMSE), with standard errors (almost negligible), over the $T$ tensors.
The code for these experiments is publicly available at 
\url{https://github.com/fernandoamat/slateOPE}.

\begin{figure}[t!]\centering%
\includegraphics[width=0.338\linewidth]{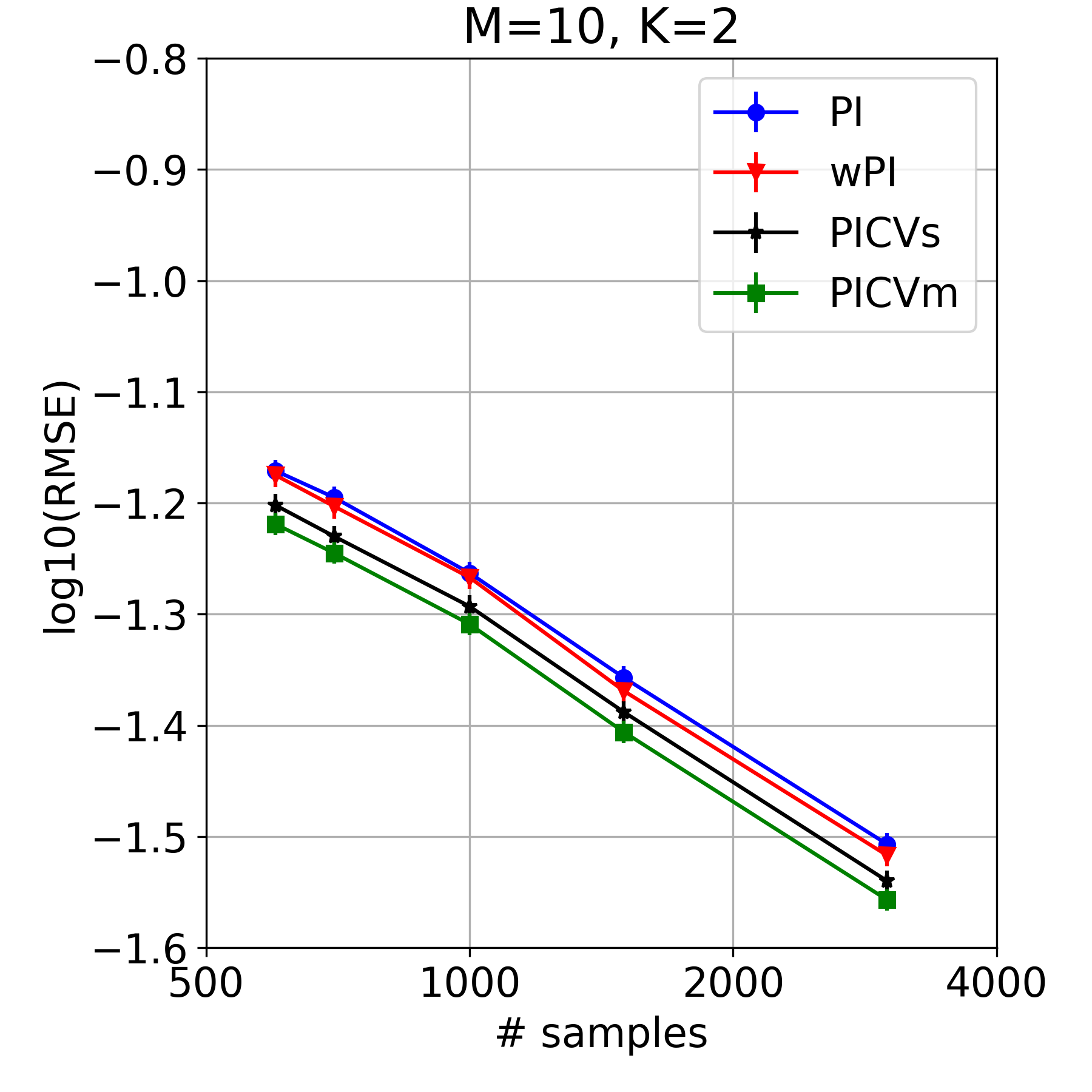}%
\includegraphics[width=0.338\linewidth]{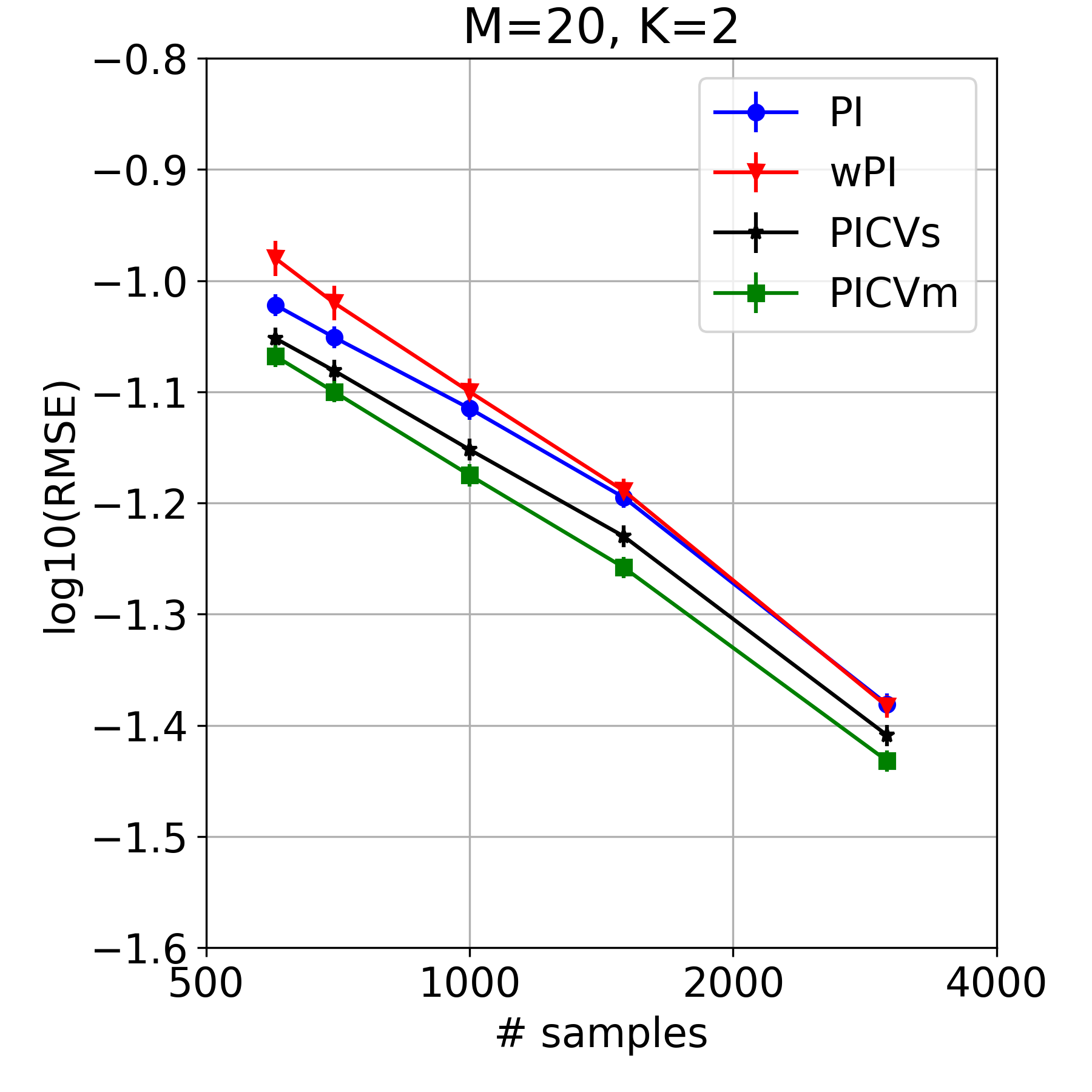}%
\includegraphics[width=0.338\linewidth]{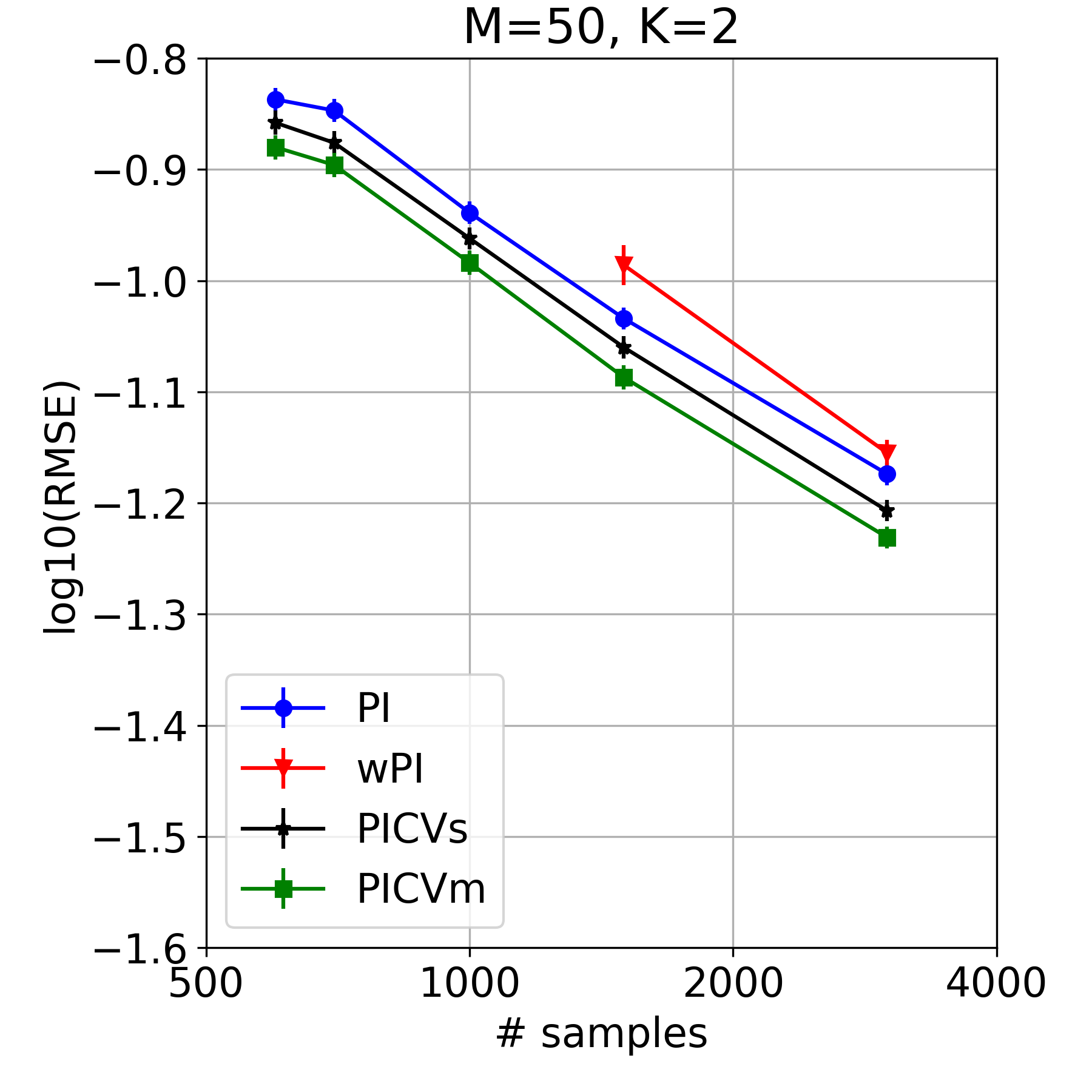}%
\caption{Comparing the estimators in a small scale simulation setting, demonstrating a differentiated behavior between the single and multi variants of PICV. See text for details. Missing values for wPI are due to excessive variance caused by the presence of outliers.}\label{fig:sim2}%
\end{figure}

The results are shown in Fig.~\ref{fig:sim2} and demonstrate a differentiation between PICVs and PICVm, with the multi variant obtaining lower RMSE by virtue of leveraging more control variates. 
In the Appendix we show the analogous plot for larger sample sizes.

\section{Conclusions and Discussion}

We studied the problem of off-policy evaluation for slate bandits. We considered a class of unbiased estimators that included PI and (asymptotically) wPI, constructed using a control variate approach. This strongly suggested trying to obtain the minimal variance in this class. We showed how to do so asymptotically, and even how to do so without incurring any additional bias in finite samples. Our new estimators led to gains in both simulations and real-data experiments.

\paragraph{Future Work.} 
An interesting avenue for future research is how to meld this approach with doubly-robust (DR) style centering. \citet{su2020doubly} consider the DR-PI estimator given by adding to the PI estimator the (unweighted) control variate $\E[Gf(X,A)-\sum_{a\in\prod_{k=1}^K[d_k]}\pi(a\mid X)f(X,a)]$.
In particular, they consider $f(x,a)$ being an estimate of $\E[R\mid A=a, X=x]$.
But unlike the standard non-additive OPE case, it is not clear that it would be efficient to consider only this control variate \citep{kallus2019intrinsically}.
In particular, restricting the statistical model to have additive rewards make standard semiparametric efficiency analyses difficult, and the efficient influence function is unknown and likely has no analytic form.
Again, we can consider a range of \emph{centered} control variates, $\sum_{k=1}^K\big(\E[Y_kf_k(X,A_k)]-\E[\sum_{a_k\in[d_k]}\pi(a_k\mid X)f_k(X,a_k)]\big)$. Letting $f_k(x,a_k)=w_k$ recovers our estimator class $\hat\theta_n^{(w)}$. However, there may be benefit to using $(x,a)$-dependent control variates.
Efficiency considerations suggest letting $f_k$ estimate $\E[R\mid A_k=a_k, X=x]$, but the (sub)optimality of such in the additive-rewards model requires further investigation. 
A promising direction forward is to consider a parametric family, $f_k\in\mathcal F_k$, and optimize the choice of parameter to minimize an empirical variance estimate, in the spirit of more robust doubly robust estimation \citep{farajtabar2018more}. As long as $\mathcal F_k$ has nice complexity, we can obtain the best-in-class variance asymptotically. Characterizing this precisely and investigating the value of this empirically remains future work.

\paragraph{Societal Impacts.}
In general, having accurate methods for performing OPE can allow decision makers to evaluate potentially unsafe decision making policies, and decide whether they may lead to improved societal outcomes, without having to risk the potentially negative consequences of trialling such policies. Nonetheless, as discussed above, OPE with combinatorial actions is nearly hopeless without making additional assumptions. These, however, can fail to hold and lead to biased estimates, which needs to be taken into account when considering potential harms of a proposed policy.
Another potential risk, which applies to OPE in general, is that the data used may not accurately reflect the diversity of the population that the policy will actually be deployed on. In this case, resulting policy value estimates may be much more accurate for individuals who are well represented in the data, compared with individuals who are not. This may bias downstream decision making towards policies that best serve those who are well represented in the data, possibly at the cost of those who are not well represented. Such biases must also be taken into account in any OPE analysis.






\section{Proofs of the main results}
\label{sec:proofs}

Here we provide succinct proofs for all results, except for \cref{lemma:unbiasedsplit} whose proof is quite involved and long and is therefore deferred to the Supplementary Material.

\begin{proof}[Proof of \cref{lemma:additive}]
Using the definition of $G$ from \cref{eq:G}, we have
$$\E [G R \mid A, X] 
   =
    \sum_{k=1}^K Y_k \ \phi_k(A_k,X) 
    + 
    \sum_{k=1}^K \Big( 1-K + \sum_{j \neq k} Y_j\, \Big)  \ \phi_k(A_k,X).
    $$
Taking expectation over $A$, the last term cancels (when $\mu$ is factored) since $\E[Y_k\mid X]=\sum_{a_k\in[d_k]}\pi(a_k\mid X)=1$, and the first term is $ \E_\pi [R \mid X]$. Further taking expectation over $X$ gives the result.
\end{proof}

\begin{proof}[Proof of \cref{lemma:unbiasedness}]
Since $\E[Y_k\mid X]=1$, we have
$\E[\Gamma_w\mid X]=\E[GR\mid X]$. Iterated expectations gives the first statement. The second statement is immediate from the central limit theorem (CLT).
\end{proof}

\begin{proof}[Proof of \cref{lemma:convergence}]
Set $\tilde\theta_n=\E_n[\Gamma_w]$.
From CLT, $\sqrt n(\tilde \theta_n-\theta)\to_d\mathcal N(0,V_w)$.
Next, note that
$\sqrt n(\tilde \theta_n-\hat\theta_n)= \sum_{k=1}^K(\hat w_{n,k}-w_{n,k})\sqrt n\E_n[Y_k-1]$. Since $\sqrt n\E_n[Y_k-1]\to_d\mathcal N(0,\op{Var}(Y_k))$ by CLT and $\hat w_{n,k}-w_{n,k}\to_p0$ by assumption, we have by Slutsky's theorem that each of the $K$ terms converges in distribution to the constant 0, and therefore also converges in probability to the constant 0. Hence, $\sqrt n(\tilde \theta_n-\hat\theta_n)\to_p0$. Applying Slutsky's theorem again establishes the claim of the lemma.
\end{proof}

\begin{proof}[Proof of \cref{lemma:convergence2}]
Since $\E_n[GR]\to_p\theta$ and $\E_n[G]\to_p1$, the continuous mapping theorem gives $\hat\theta_n^\text{wPI}\to_p\theta$.
Next, note that we can rewrite
$\frac{\E_n[GR]}{\E_n[G]} = \E_n[GR]+\E_n[GR]\frac{1-\E_n[G]}{\E_n[G]}  
= \E_n[GR]-\hat\theta_n^\text{wPI}\E_n[G-1] 
= \E_n[GR]-\sum_k\hat\theta_n^\text{wPI}\E_n[Y_k-1]$.
Then, invoking \cref{lemma:convergence} completes the proof.
\end{proof}

\begin{proof}[Proof of \cref{lemma:optimalsingle}]
We have $V_{\beta}=\op{Var}(GR)-2\beta\E[GR(G-1)]+\beta^2\E[(G-1)^2],$ which is convex in $\beta$. Differentiating, setting to zero, and solving for $\beta$, we get
$
\beta^*=\frac{\E[GR(G-1)]}{\E[(G-1)^2]}.
$
For the numerator, we have
$
\E[GR(G-1)]=\E[G^2R]-\E[GR]=\E[G^2R]-\theta.
$
For the denominator, we have
$
\E[(G-1)^2]=\op{Var}(G)=\sum_k\op{Var}(Y_k)$.
Combining yields the claim of the lemma.
\end{proof}

\begin{proof}[Proof of \cref{lemma:achievingoptimalsingle}]
This is direct from the weak law of large numbers and \cref{lemma:convergence,lemma:optimalsingle}.
\end{proof}

\begin{proof}[Proof of \cref{lemma:improvement1}]
This follows by algebra and noting that $V^\dagger\geq \max(V_0,V_\theta)$ since both $\beta=0$ and $\beta=\theta$ are feasible in \cref{eq:voptsignle}.
\end{proof}

\begin{proof}[Proof of \cref{lemma:optimalmultiple}]
Let $C=(Y_1-1,\dots,Y_K-1)$.
We have $V_w=\op{Var}(GR)-2w\tr \E[GRC]+w\tr \E[CC\tr]w$, which is convex in $w$.
Differentiating, setting to zero, and solving for $w$ gives
$
w^*=\E[CC\tr]^{-1}\E[GRC].
$
We have $\op{Cov}(Y_k,Y_{k'})=0$ whenever $k\neq k'$. Hence we can simplify and obtain the result.
\end{proof}

\begin{proof}[Proof of \cref{lemma:achievingoptimalmultiple}]
This is direct from the weak law of large numbers and \cref{lemma:convergence,lemma:optimalmultiple}.
\end{proof}

\begin{proof}[Proof of \cref{lemma:improvement2}]
This follows by algebra and noting that $V^*\geq \max(V_0,V^\dagger)$ since both $w_1=\dots=w_K=0$ and $w_1=\dots=w_K=\beta^*$ are feasible in \cref{eq:voptmulti}.
\end{proof}

\section*{Acknowledgments}
We want to thank Adith Swaminathan for helping with the MSLR-WEB30K data and code, and for many useful discussions.



\appendix

\setcounter{lemma}{10}
\section{Proof of Lemma 11}

\begin{lemma}[Unbiased estimator with optimal asymptotic variance]
We have
\begin{align}
\E [\hat \theta^{*}_n ] &=\theta ,
\\
\sqrt n(\hat \theta^{*}_n-\theta)&\to_d\mathcal N(0,V^*) .
\end{align}
\end{lemma}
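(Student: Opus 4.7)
The plan is to establish the two claims---finite-sample unbiasedness and asymptotic normality at the optimal variance---separately, leveraging the independence structure created by the three-fold split.

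For unbiasedness, the key observation is that $\hat w^{(j+1\,\op{mod}\,3)}_{n,k}$ is a function of the data in $\mathcal D_{j+1\,\op{mod}\,3}$ alone, which is disjoint from---and so, by the iid assumption, independent of---$\mathcal D_j$. Taking expectations in the definition of $\hat\theta^{*}_n$ term by term and conditioning on $\mathcal D_{j+1\,\op{mod}\,3}$, each cross-fit product satisfies
\[
\E\bigl[\hat w^{(j+1\,\op{mod}\,3)}_{n,k}\,\E_{\mathcal D_j}[Y_k-1]\bigr] = \E\bigl[\hat w^{(j+1\,\op{mod}\,3)}_{n,k}\bigr]\cdot \E[Y_k-1] = 0,
\]
since $\E[Y_k]=1$ under factored $\mu$. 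Combined with $\E[\E_n[GR]]=\theta$, this delivers $\E[\hat\theta^{*}_n]=\theta$ exactly.

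For the asymptotic distribution, the plan is to reduce to \cref{lemma:unbiasedness} at fixed $w=w^{*}$ via a Slutsky-type argument. First, applying the weak law of large numbers within each fold (whose size grows like $n/3$) to numerator and denominator separately, and invoking the continuous mapping theorem to handle the ratio, one obtains $\hat w^{(j)}_{n,k}\to_p w^{*}_k$ for each $j$. Next I decompose
\[
\sqrt n\,(\hat\theta^{*}_n-\theta) = \sqrt n\,(\hat\theta^{(w^{*})}_n-\theta) + \sqrt n\,(\hat\theta^{*}_n - \hat\theta^{(w^{*})}_n),
\]
where \cref{lemma:unbiasedness} controls the first summand, sending it to $\mathcal N(0,V^{*})$. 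Using $\E_n[Y_k-1]=\sum_{j=0}^2(\abs{\mathcal D_j}/n)\,\E_{\mathcal D_j}[Y_k-1]$, the residual rewrites as
\[
-\sum_{k=1}^K\sum_{j=0}^2 \frac{\abs{\mathcal D_j}}{n}\,\bigl(\hat w^{(j+1\,\op{mod}\,3)}_{n,k} - w^{*}_k\bigr)\,\sqrt n\,\E_{\mathcal D_j}[Y_k-1].
\]
Since $\sqrt n\,\E_{\mathcal D_j}[Y_k-1] = \sqrt{n/\abs{\mathcal D_j}}\cdot \sqrt{\abs{\mathcal D_j}}\,\E_{\mathcal D_j}[Y_k-1] = O_p(1)$ by the fold-level CLT and $|\mathcal D_j|/n\to 1/3$, and since the weight deviation is $o_p(1)$, each summand is $o_p(1)$. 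A final application of Slutsky's theorem concludes.

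The main obstacle I anticipate is in the bookkeeping around the three-fold indexing. The Slutsky reduction above would already go through with just two folds, as would the unbiasedness argument, so the case for three folds must rest on a finer variance accounting: for any two fold indices $j,j'\in\{0,1,2\}$, at least one of $j+1\not\equiv j'\pmod 3$ or $j'+1\not\equiv j\pmod 3$ holds, so for any pair of observations $(i,i')$ the fitted weight governing at least one of them is independent of both. To rigorously justify that this structural asymmetry lets the limiting variance be exactly $V^{*}$ (and not $V^{*}$ inflated by residual coupling between weights and evaluation data), one must bound the relevant cross-covariance contributions uniformly in $n$ and check that they vanish at a faster-than-$1/n$ rate; this is the careful---if mechanical---computation that presumably accounts for the proof being deferred to the supplementary material.
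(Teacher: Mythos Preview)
Your proof is correct, and the unbiasedness half is verbatim the paper's argument. For the asymptotic half you take a slightly different---and more economical---route: you decompose $\sqrt n(\hat\theta^{*}_n-\theta)$ around the oracle $\hat\theta^{(w^*)}_n$ and dispatch the residual globally via Slutsky, exactly as in \cref{lemma:convergence}. The paper instead applies the CLT fold-by-fold (replacing each $\hat w_n^{(m(j))}$ by $w^*$ via the same Slutsky step), obtains a joint limit with diagonal covariance $V^*I_3$ across the three folds, and then linearly recombines using $\abs{\mathcal D_j}/n\to 1/3$. Both arguments are the same Slutsky reduction; yours just skips the per-fold packaging.

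Your closing paragraph is where you second-guess yourself unnecessarily. You are right that your Slutsky argument---and the unbiasedness argument---would go through verbatim with two folds, and you conclude that some ``finer variance accounting'' must be needed to justify three. There is no gap in your proof of the lemma as stated: two folds already suffice for both conclusions. What the paper additionally establishes (and what genuinely requires three folds) is the exact finite-sample identity $\op{Cov}(U_i,U_{i'})=0$ for $i\neq i'$, where $U_i=G_iR_i-(\hat w_n^{(m(j(i)))})^\top C_i$. This uses precisely the property you identified---that for any pair $(j,j')$ at least one of $m(j)\neq j'$ or $m(j')\neq j$ holds---to kill the cross term $\E[(\hat w_n^{(m(j(i)))})^\top C_i\,(\hat w_n^{(m(j(i')))})^\top C_{i'}]$. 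With two folds this cross term is generally nonzero. But that computation is a finite-sample refinement (exact uncorrelatedness of summands), not something the asymptotic statement in the lemma requires; the paper presents it but then proves $\sqrt n$-convergence by the same Slutsky route you use. So your instinct about where the three-fold structure matters is correct, but it is not a missing ingredient in your proof---it is an extra property the paper's construction enjoys beyond the lemma's claims.
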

\begin{proof}
For each $j=0,1,2$, we have \begin{align}
& \E[\hat w^{(j+1\;\op{mod}\;3)}_{n,k} \E_{\mathcal D_j}[Y_k-1]]
=
\E[\hat w^{(j+1\;\op{mod}\;3)}_{n,k}]\E[\E_{\mathcal D_j}[Y_k-1]]]=0
\end{align}
because the folds are independent. Therefore, $$\E[\hat \theta^{*}_n]=\E[\E_{n}[GR]]=\theta,$$ establishing the first statement.

Let $j(i)$ be such that $i\in\mathcal D_j$. And, for brevity, define $m(j)={j+1\;\op{mod}\;3}$.
Let $Z=RG$ and $C=(Y_1-1,\dots,Y_K-1)$.
Define
$$
U_i=Z_i+(\hat w_n^{(m(j(i)))})\tr  C_i.
$$
Then $\E [U_i]=\theta$, and for $i\neq i'$:
\begin{align}
\op{Cov}(U_i,U_{i'}) &= \E[Z_iZ_{i'}]-\theta^2 \\ &  \phantom{=}-\E[Z_i(\hat w_n^{(m(j(i')))})\tr  C_{i'}] \\ &\phantom{=}-\E[Z_{i'}(\hat w_n^{(m(j(i)))})\tr  C_{i}]
\\ & \phantom{=}+\E[(\hat w_n^{(m(j(i)))})\tr  C_{i}(\hat w_n^{(m(j(i')))})\tr  C_{i'}].
\end{align}
Because $i\neq i'$, by independence $\E[Z_iZ_{i'}]=\E[Z_i]\E[Z_{i'}]=\theta^2$.
Because $i\notin\{i'\}\cup \mathcal D_{m(j(i))}$ and because $\E[ C_{i'}]=0$, we have $$\E[Z_i(\hat w_n^{(m(j(i')))})\tr  C_{i'}]=0.$$
Similarly, $$\E[Z_{i'}(\hat w_n^{(m(j(i)))})\tr  C_{i}]=0.$$
Finally notice that either $j(i)\neq m(j(i'))$ or $j(i')\neq m(j(i))$. Therefore, $$\E[(\hat w_n^{(m(j(i)))})\tr  C_{i}(\hat w_n^{(m(j(i')))})\tr  C_{i'}]=0.$$ We conclude that $\op{Cov}(U_i,U_{i'})=0$.

We of course have $\hat w_n^{(j)}\to_p w^*$ for each $j=0,1,2$. So,
by multivariate CLT and Slutsky's theorem we have
\begin{align}
&\left(\begin{array}{c}
\sqrt{\abs{\mathcal D_1}}\prns{
     \E_{D_1}Z-(\hat w_n^{(m(1))})\tr \E_{D_1} C-\theta
}\\
\sqrt{\abs{\mathcal D_2}}\prns{
     \E_{D_2}Z-(\hat w_n^{(m(2))})\tr \E_{D_2} C-\theta
}\\
\sqrt{\abs{\mathcal D_3}}\prns{
     \E_{D_3}Z-(\hat w_n^{(m(3))})\tr \E_{D_3} C-\theta
}
\end{array}\right)
\to_d\mathcal N\prns{
\prns{\begin{array}{c}0\\0\\0\end{array}},
\prns{\begin{array}{ccc}V^*&0&0\\0&V^*&0\\0&0&V^*\end{array}}
}.
\end{align}

Since $\frac{\abs{\mathcal D_j}}n\to_p\frac{1}{3}$, we therefore have by Slutsky's and the continuous mapping theorem that
\begin{align}
\sqrt n(\hat \theta^{*}_n-\theta) 
=
\sum_{j=0,1,2}
\sqrt{\frac{\abs{\mathcal D_j}}n}
\sqrt{\abs{\mathcal D_j}}\prns{
     \E_{D_j}Z-(\hat w_n^{(m(j))})\tr \E_{D_j} C-\theta
}
\to_d \mathcal N(0,V^*),
\end{align}
yielding the second statement.
\end{proof}


\section{Additional results on the MSLR-WEB30K data}

In Fig.~\ref{fig:mslrsupp} we show results comparing the proposed PICVs estimator with PI and wPI on the MSLR-WEB30K data \citep{qin2013introducing}, using a lasso-based regression model for the target policy. As in the corresponding plot in the main paper (where we used a tree-based predictor for the target policy), here we vary $M$, the number of available actions (documents) per slot, $K$, the number of slots (size of ranked lists),  and the choice of metric that defines slate-level reward. Missing values for wPI (for  sample size 1k) are due to excessive variance caused by the presence of outliers. Results are qualitatively very similar to what we obtained with tree-based models. 
 
 \bigskip
 
\begin{figure}[h!]\centering%
\includegraphics[width=0.245\linewidth]{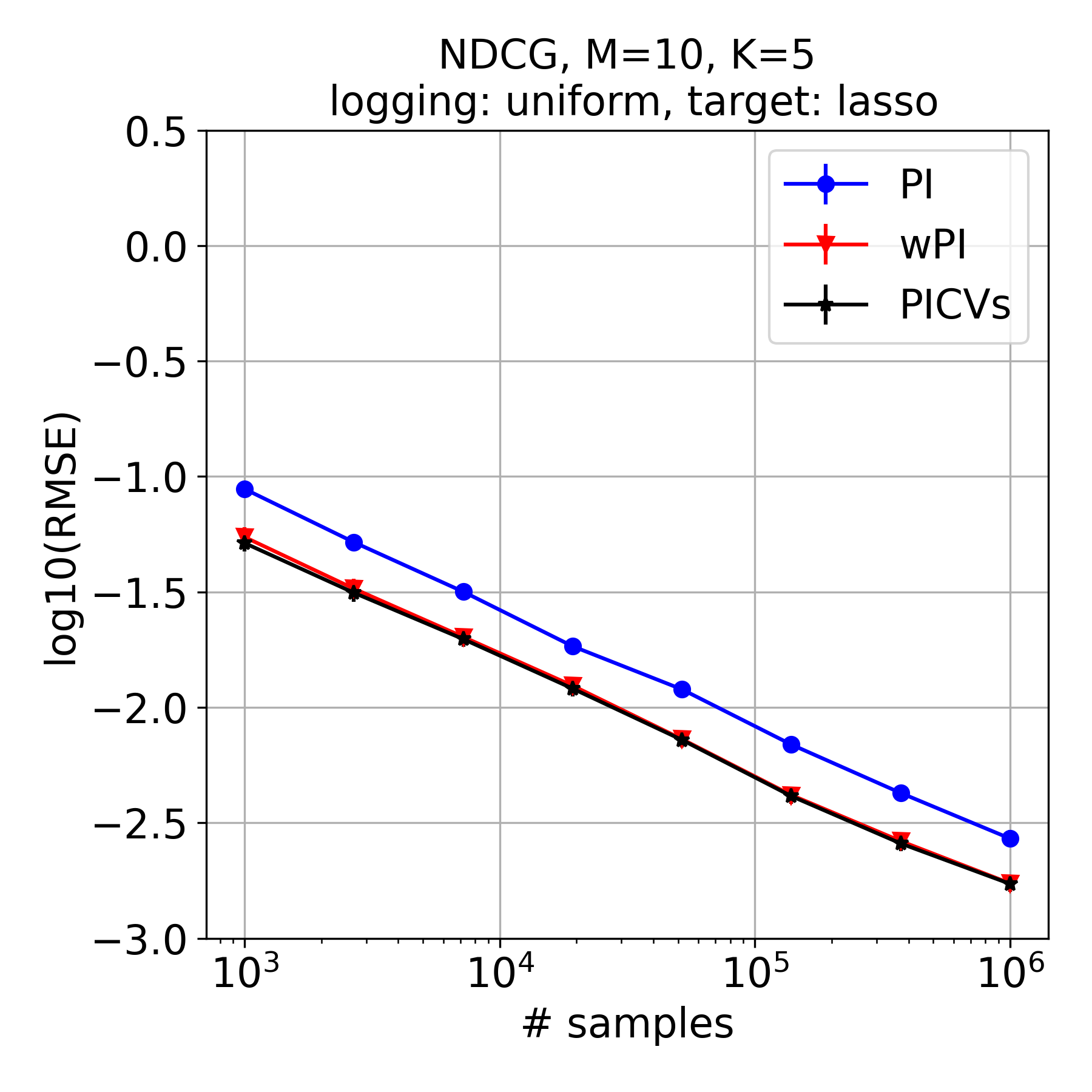} \includegraphics[width=0.245\linewidth]{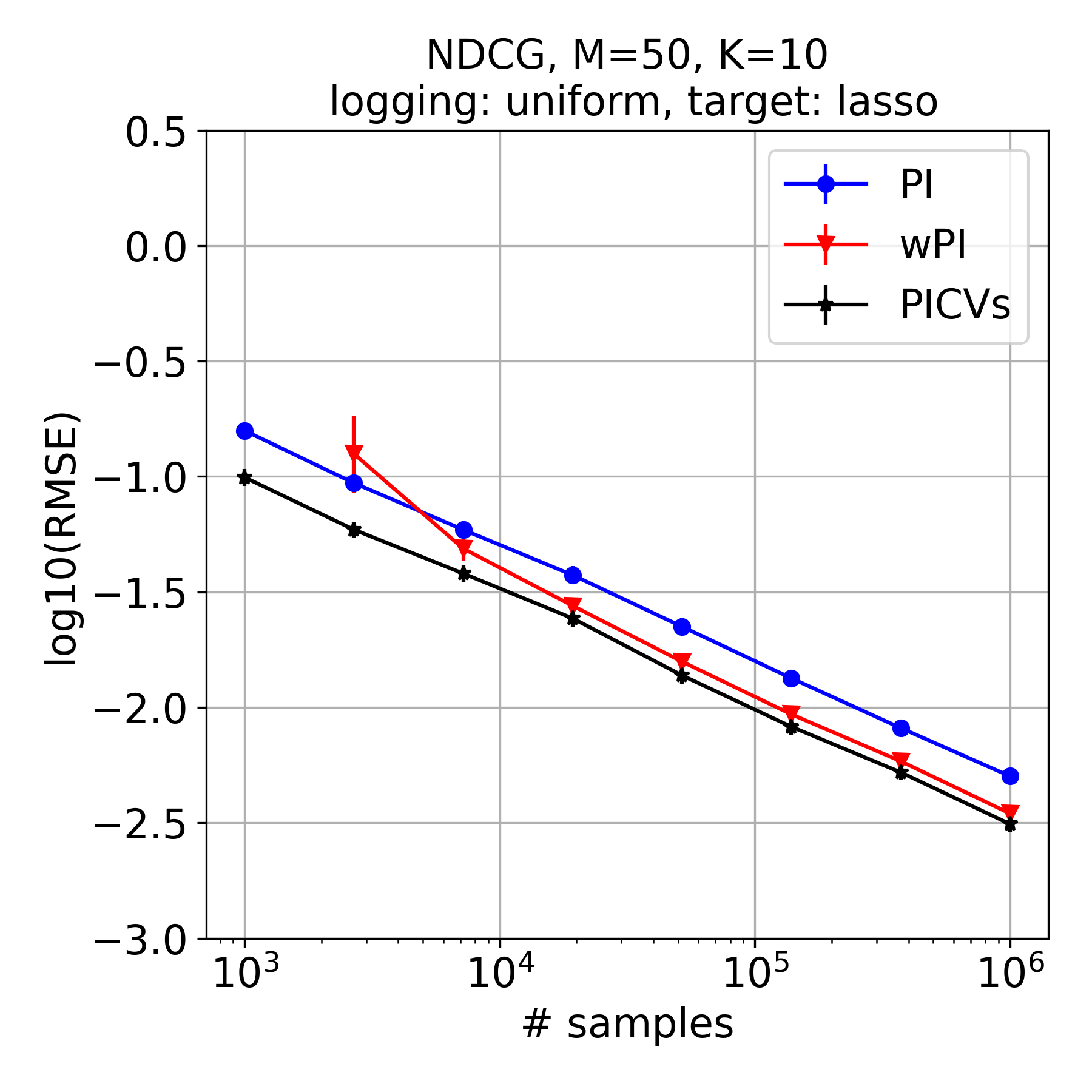} 
\includegraphics[width=0.245\linewidth]{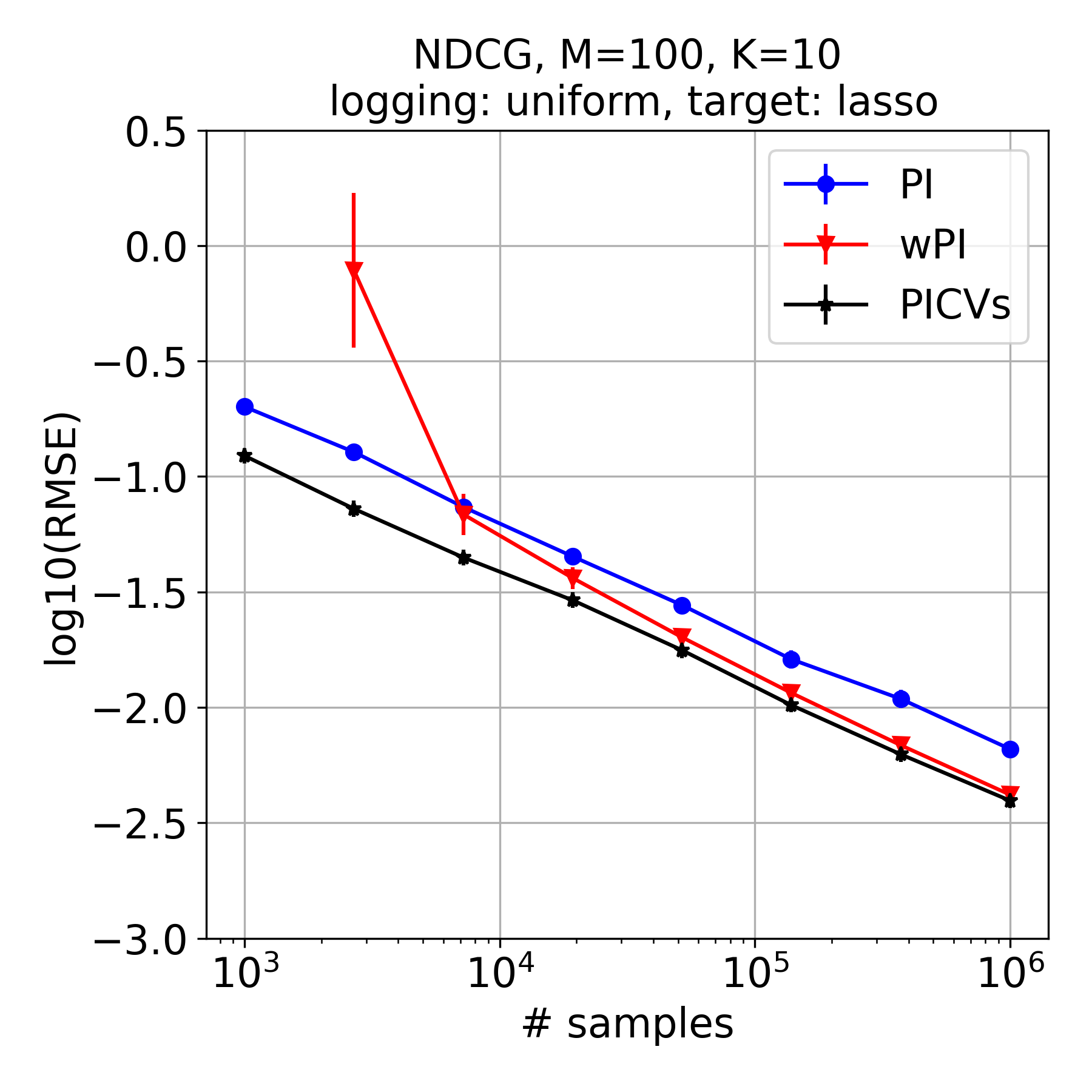} 
\includegraphics[width=0.245\linewidth]{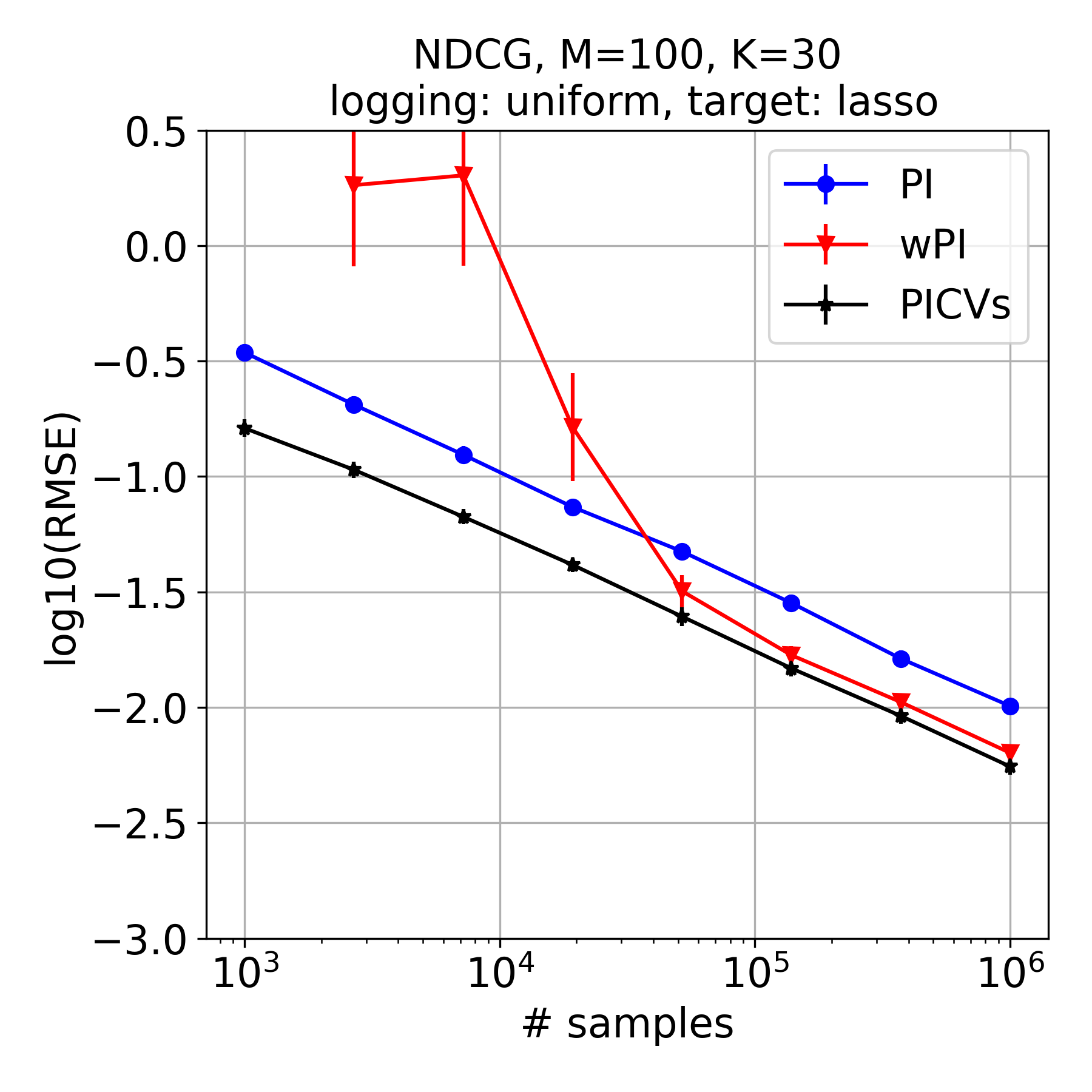} \\
\smallskip 
\includegraphics[width=0.245\linewidth]{10_5_ERR_lasso_300.png} \includegraphics[width=0.245\linewidth]{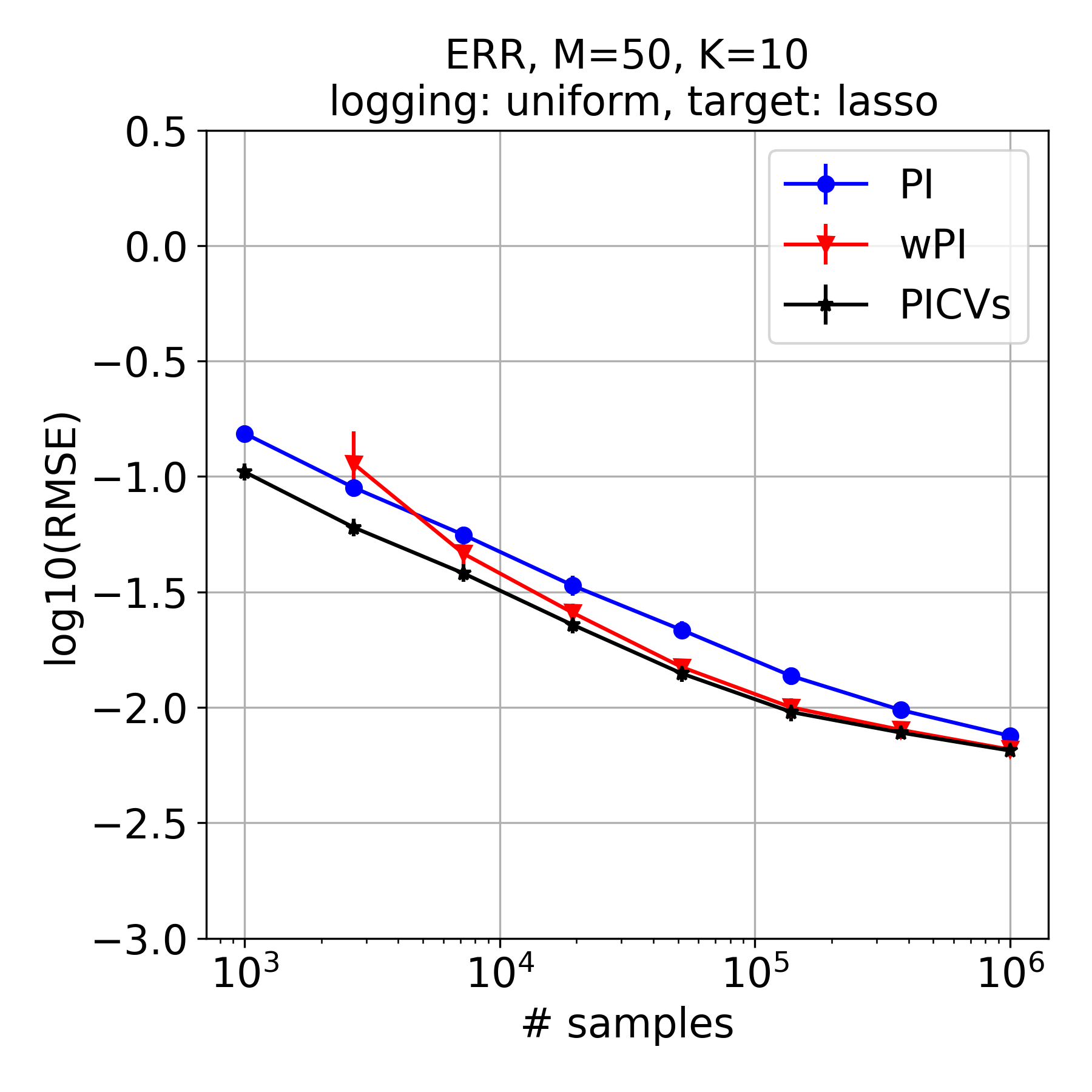} 
\includegraphics[width=0.245\linewidth]{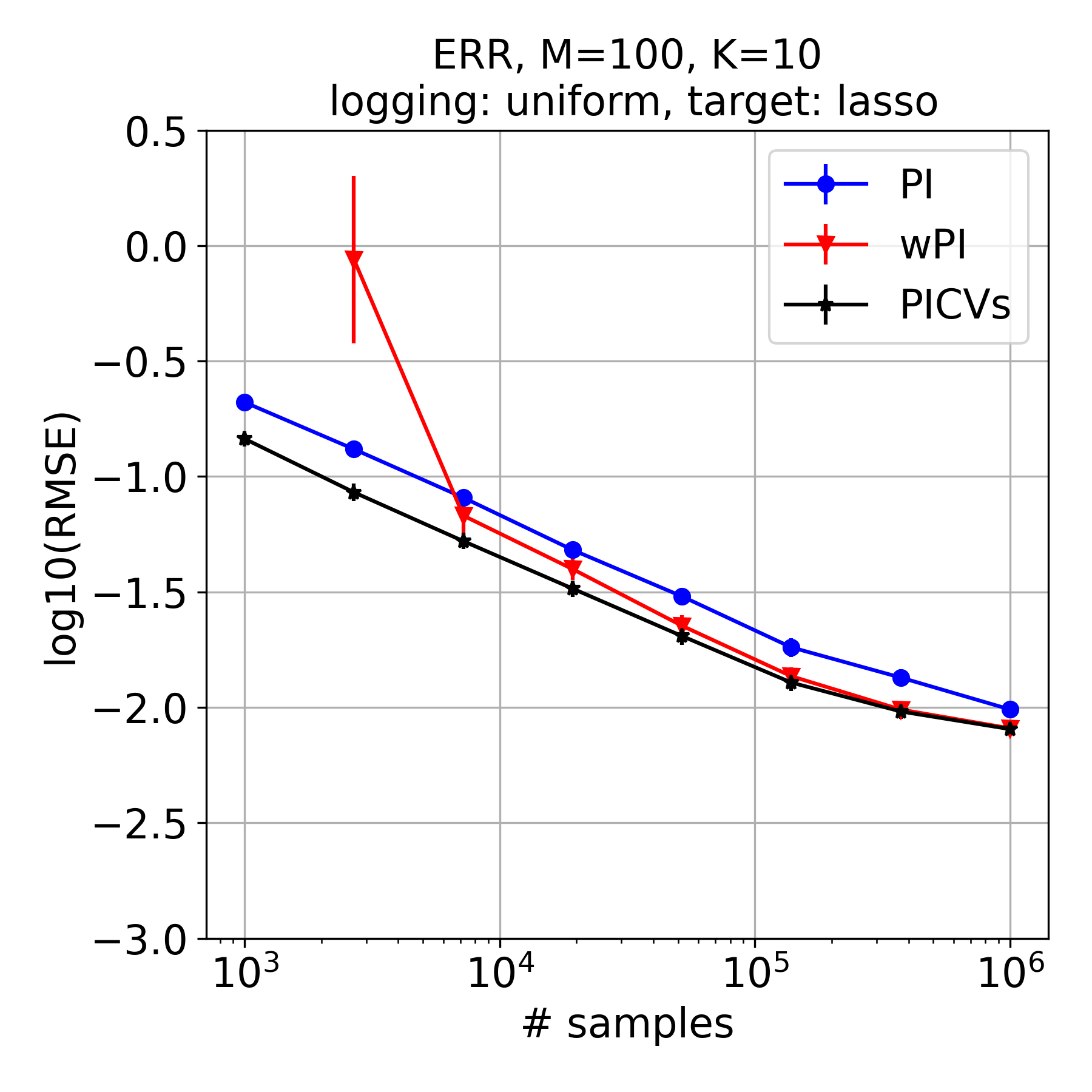} 
\includegraphics[width=0.245\linewidth]{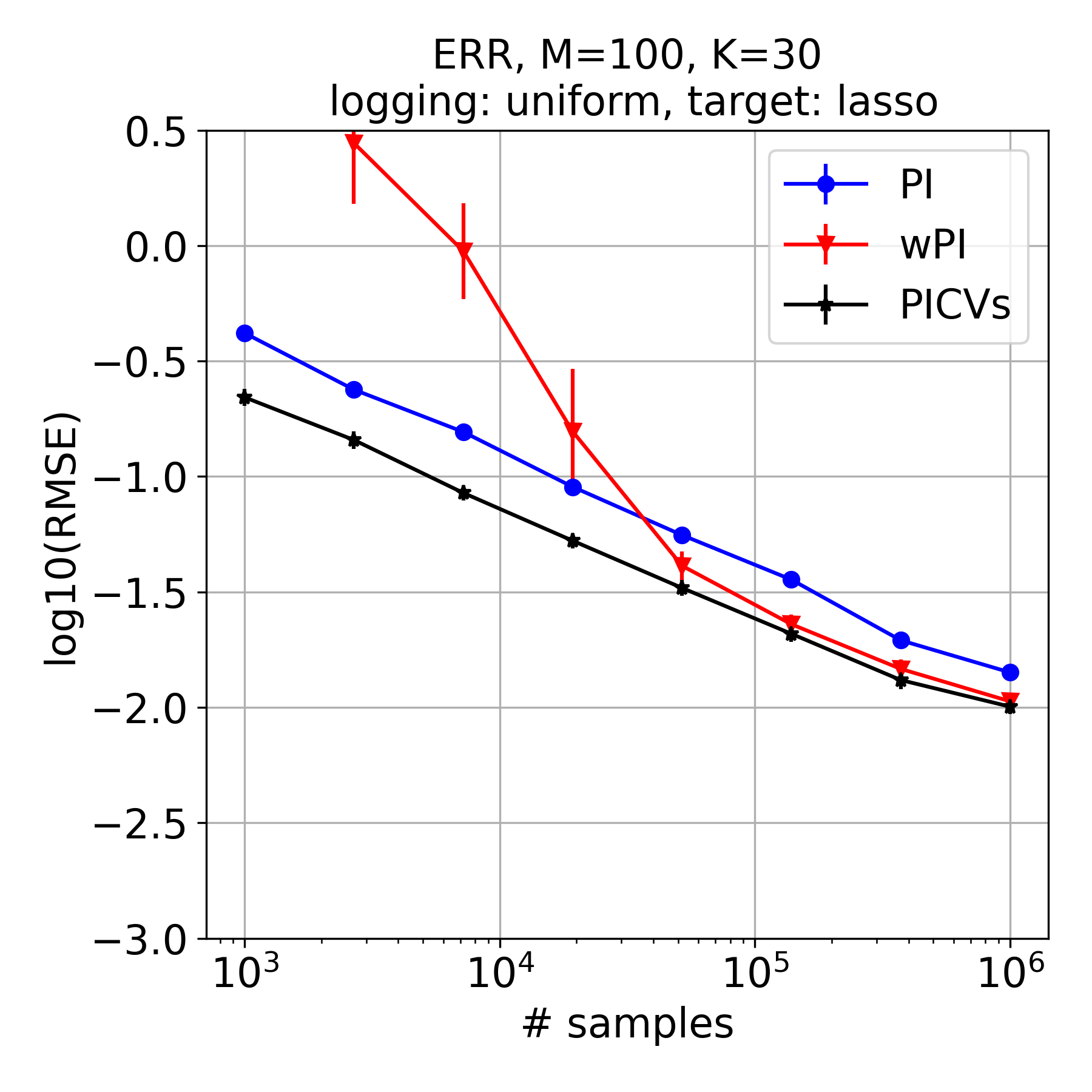}
\caption{Benchmarking the proposed PICVs estimator against PI and wPI on the MSLR-WEB30K data, using a lasso-based regression model for the target policy.}\label{fig:mslrsupp}%
\end{figure}

\section{Additional results on the synthetic data}

In Fig.~\ref{fig:mslrsupp2} we reproduce the results of Fig.~2  (simulations on a non-contextual slate bandit problem) but for a wider range of sample sizes. 
As in Section 7,
we observe that the multi (PICVm) variant can dominate the single (PICVs) variant over a large range of sample sizes, albeit the improvement is not as pronounced as the improvement of PICVs over PI and wPI as shown in Fig.~1 (and as we discuss in the first paragraph of Section 4.2 ``Optimal Multiple Control Variates''). For larger values of $K$ (we tried $K=5$ and $K=10$), PICVs and PICVm start demonstrating similar behavior  (as in the experiment with the MSLR-WEB30K data described in Section 6), and we omit the corresponding plots.

 \bigskip
 
\begin{figure}[h!]\centering%
\includegraphics[width=0.245\linewidth]{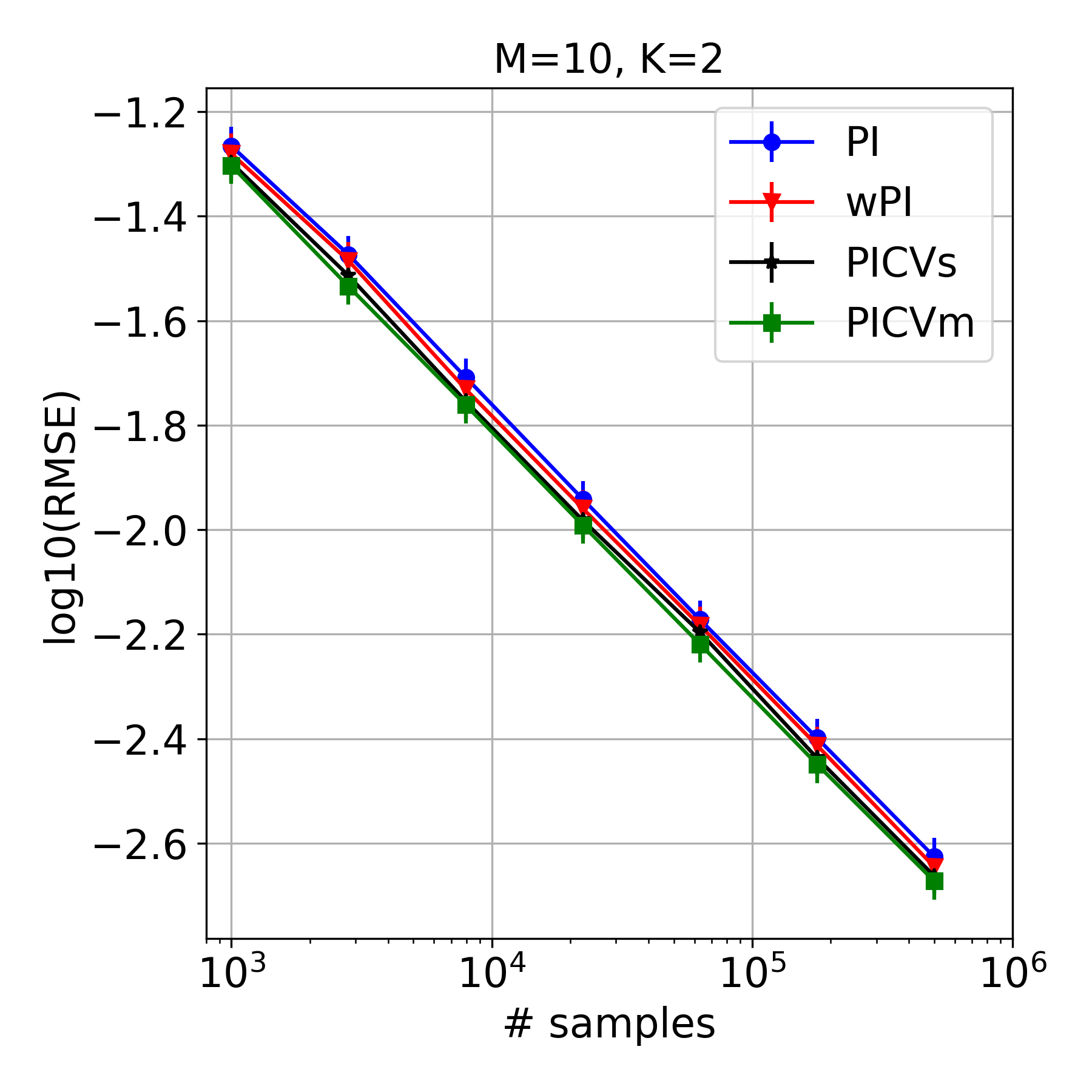} 
\includegraphics[width=0.245\linewidth]{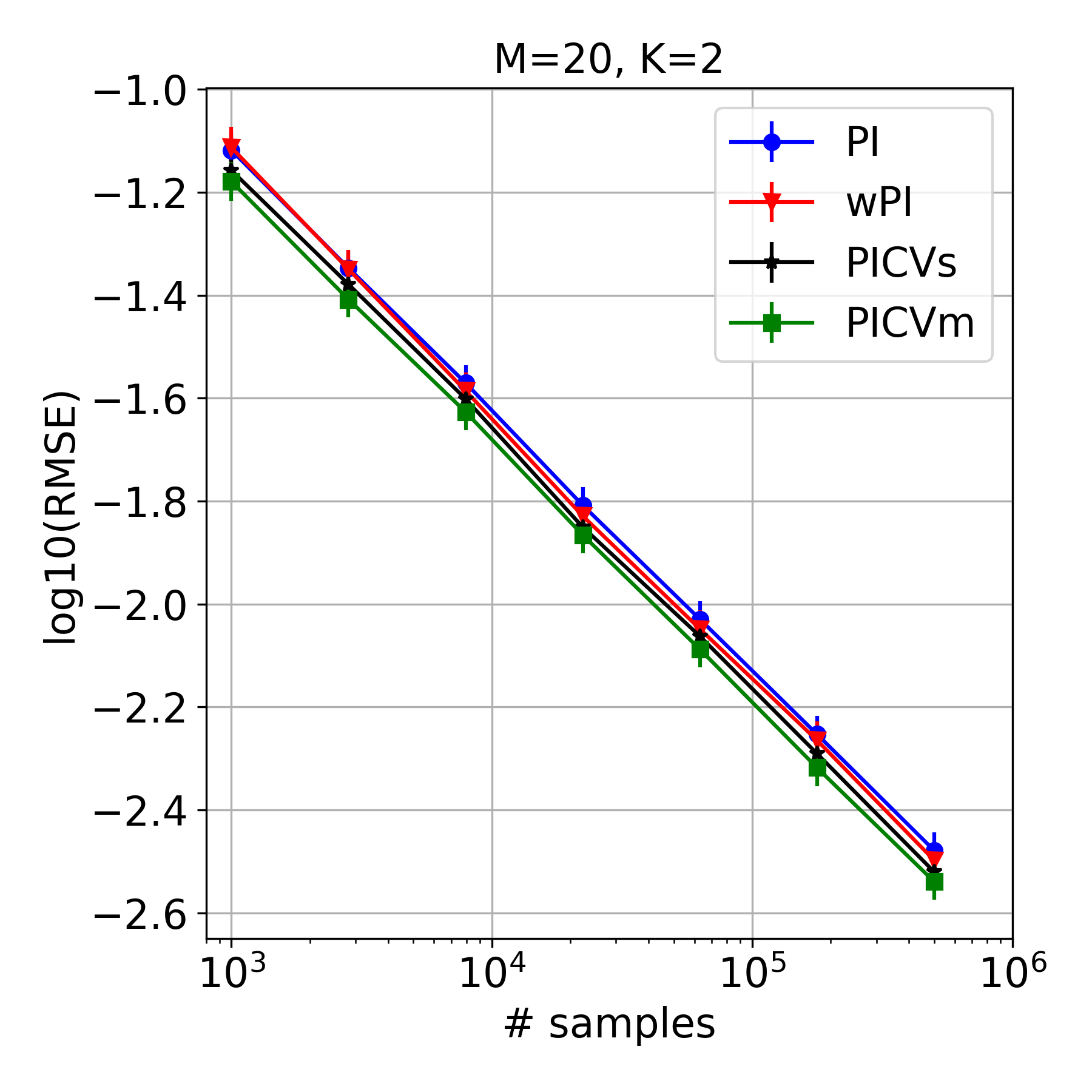} 
\includegraphics[width=0.245\linewidth]{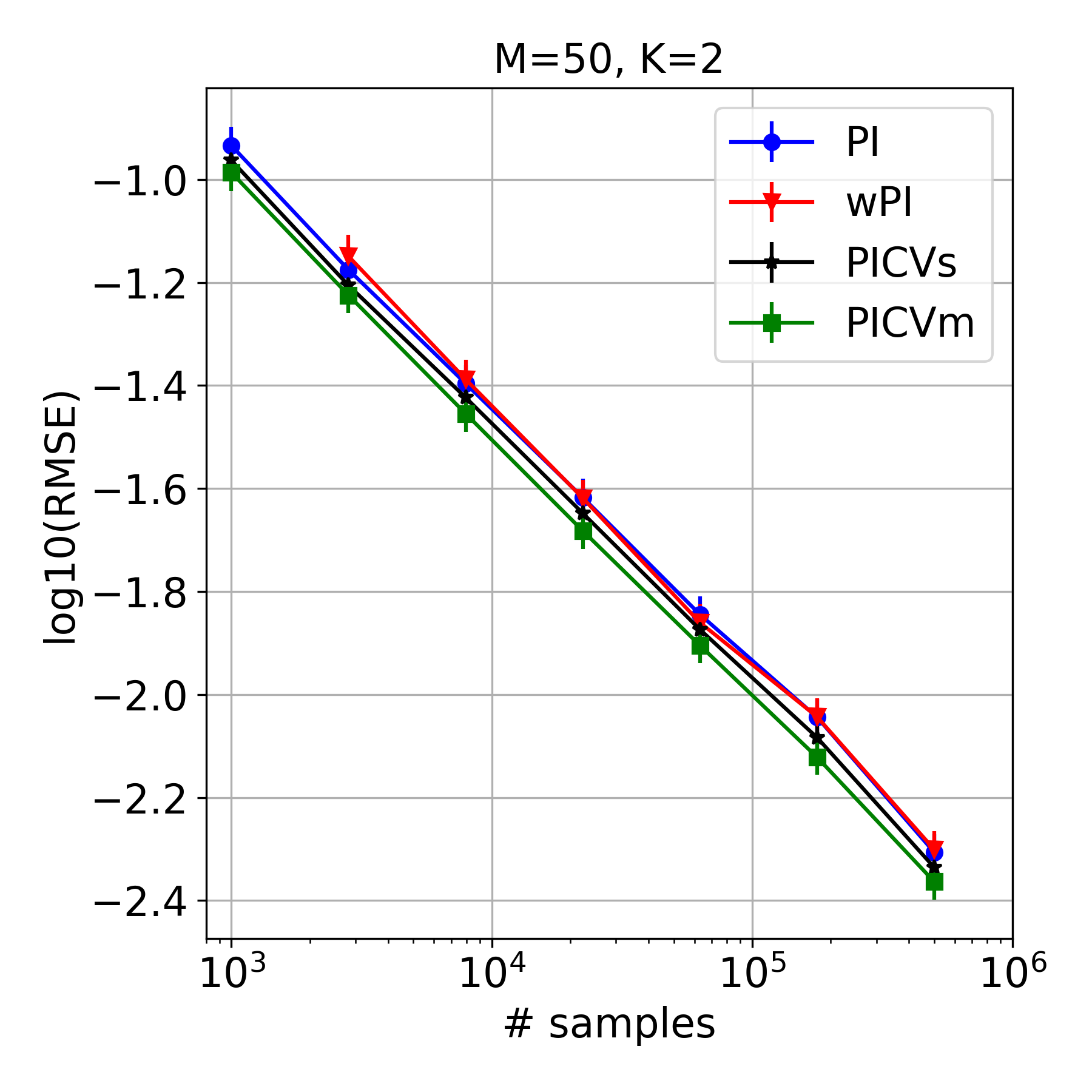} 
\includegraphics[width=0.245\linewidth]{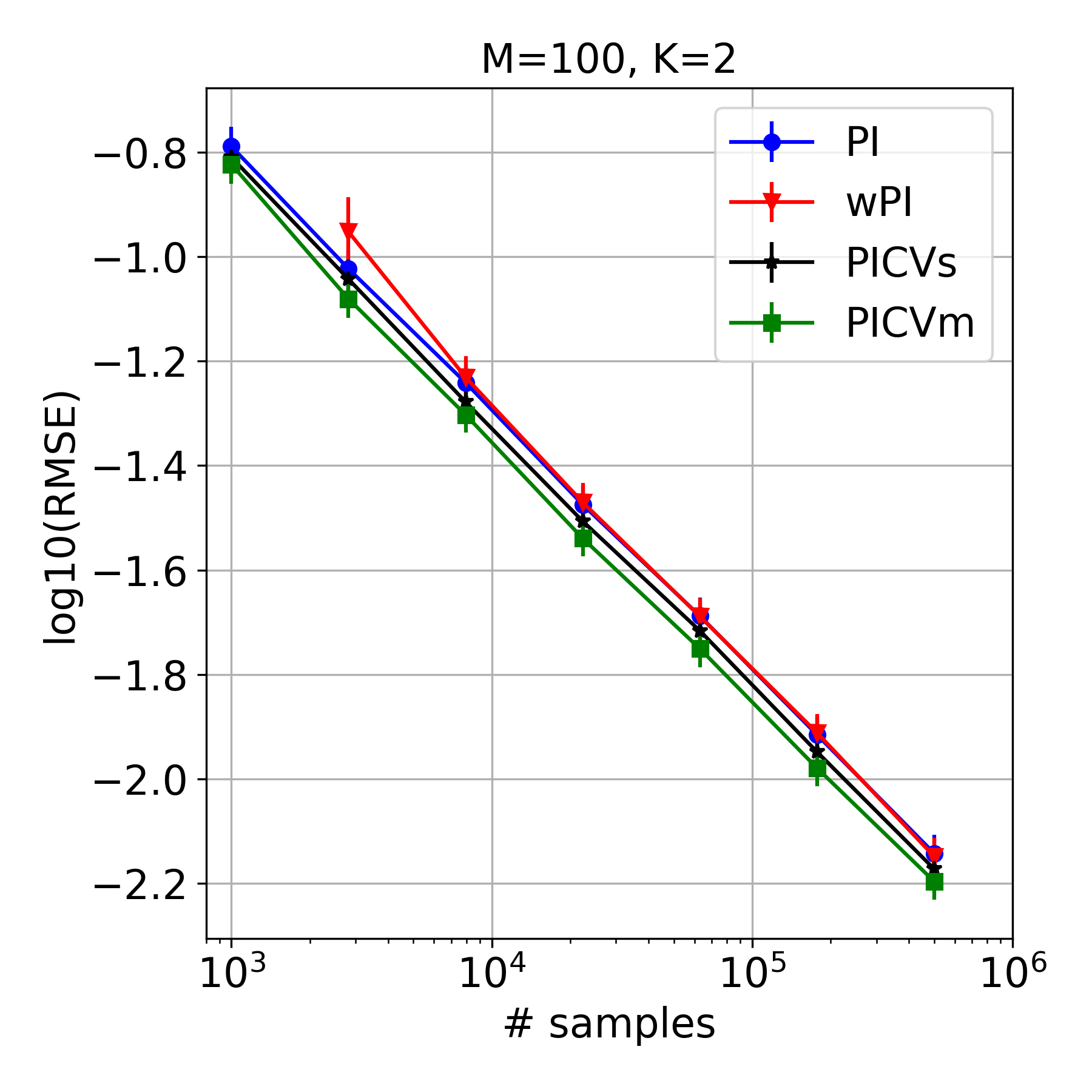}
\caption{The analogous plots of Fig.~2 (simulations on a non-contextual slate bandit problem) but here for a wider range of sample sizes. See text for details and discussion.}\label{fig:mslrsupp2}%
\end{figure}

\bibliographystyle{apalike}
{
\bibliography{ope}
}

\end{document}